\DeclareMathOperator*{\argmin}{argmin}
\newtheorem{definition}{Definition}
\newtheorem{assumption}{Assumption}
\newtheorem{theorem}{Theorem}
\newtheorem{lemma}{Lemma}
\newtheorem*{remark}{Remark}
\newtheorem{corollary}{Corollary}
\title{Statistical Regeneration Guarantees of the Wasserstein Autoencoder with Latent Space Consistency}
\author{%
  Anish Chakrabarty \\
  Statistics and Mathematics Unit \\
  Indian Statistical Institute, Kolkata 
  \And
  Swagatam Das \\
  Electronics and Communication Sciences Unit \\
  Indian Statistical Institute, Kolkata 
}
\begin{document}

\maketitle

\begin{abstract}
  The introduction of Variational Autoencoders (VAE) has been marked as a breakthrough in the history of representation learning models. Besides having several accolades of its own, VAE has successfully flagged off a series of inventions in the form of its immediate successors. Wasserstein Autoencoder (WAE), being an heir to that realm carries with it all of the goodness and heightened generative promises, matching even the generative adversarial networks (GANs). Needless to say, recent years have witnessed a remarkable resurgence in statistical analyses of the GANs. Similar examinations for Autoencoders, however, despite their diverse applicability and notable empirical performance, remain largely absent. To close this gap, in this paper, we investigate the statistical properties of WAE. Firstly, we provide statistical guarantees that WAE achieves the target distribution in the latent space, utilizing the Vapnik–Chervonenkis (VC) theory. The main result, consequently ensures the regeneration of the input distribution, harnessing the potential offered by Optimal Transport of measures under the Wasserstein metric. This study, in turn, hints at the class of distributions WAE can reconstruct after suffering a compression in the form of a latent law. 
\end{abstract}

\section{Introduction}

Unsupervised data generation has been a time-hallowed problem in machine learning theory. Though not its strongest suit, Variational Autoencoders (VAE) \cite{Kingma2014} have been implemented for generative purposes to a great extent. Once emerged as a crucial unsupervised method for representation learning, its poor performances in this particular area somewhat depreciates its appeal. On the contrary, generative adversarial networks (GANs) \cite{10.5555/2969033.2969125} have proven to be superior when it comes to the quality of the generated image samples. The theoretical elegance of the adversarial training method can be duly credited for this achievement. In an attempt to adopt this merit, such 'adversarial loss' functions were incorporated into VAE objectives \cite{makhzani2016adversarial, mescheder2018adversarial}. The Wasserstein metric, also being used in GANs \cite{pmlr-v70-arjovsky17a} previously, was an inevitable choice, since it brings along the beckoning of  Transportation Theory. Using the same to measure the reconstruction loss in a VAE, Wasserstein Autoencoder (WAE) \cite{tolstikhin2019wasserstein} was conceived. By approaching generative modeling from an Optimal Transport (OT) point of view, it not only achieved remarkable reconstruction standards but also strengthened the theoretical backbone. 

An appropriate WAE framework consists of two components, namely an \textit{encoder} $(E)$ and a \textit{decoder} $(D)$, which are represented by neural networks in practice. Samples from an input distribution $(P_X)$ are fed into the encoder, which aims at resulting in a target law $(P_Z)$ in the latent space $(\mathcal{Z})$. The decoder deals with the reconstruction of the original distribution from this 'encoded' signal. Mathematically, the objective of WAEs is to minimize $\mathcal{L} = \mathcal{R}(D,E) + \Omega(E)$ over the set of probabilistic encoders $E$. Here, $\mathcal{R}()$ ensures the reconstruction of samples put in whereas, $\Omega()$ ensures the attainment of the latent distribution. Needless to say, $\mathcal{R}()$ is represented by the Wasserstein discrepancy.

As the name suggests, our primary interest lies in investigating whether a suitably characterized WAE can \textit{learn} a distribution, which represents its latent behaviour, without losing the \textit{memory} of the initial signal it was fed. When it comes to memorizing the input distribution in the form of reconstruction, theoretical guarantees for adversarial training, in the context of GANs, have been provided rigorously. In the absence of a flexible discriminator, WAEs need more care and closer observation. Moreover, the concurrent task of sculpting a desired latent distribution adds nuances to its mechanism. Most of the existing literature on deep generative models, under adversarial loss, misses out on this very point. Against such a backdrop, this first of its kind study substantiates the simultaneous performances of WAE, from a theoretical viewpoint.

Our analysis starts with a special family of WAEs, namely \textit{f}-Wasserstein Autoencoders (\textit{f}-WAE) \cite{NEURIPS2019_eae27d77}, which arises due to the specific choice of $\Omega()$ as \textit{f}-divergences (\ref{Definition1}). We follow a non-parametric approach in defining both the networks involved and the distributions under consideration. The transformations induced by the encoder and the decoder are conceived as measurable functions between spaces. The nature of an `ideal' encoder in our language, is a network that preserves information. Whereas, the decoder is characterized by a map that propagates the information retained, precisely. Our statistical assumptions on the same reflect these very ideas. As a result, the two concurrent objectives of \textit{f}-WAE boil down to non-parametric estimation of laws, under vastly different circumstances. 

Our major findings can be summarized in the following way,
\begin{itemize}
    \item Given a non-negative margin of error, \textit{f}-WAEs can learn latent space distributions with a convergence rate of $\mathcal{O}(n^{-\frac{1}{2}})$, under fairly gentle assumptions on the input and target law (\ref{Corollary1}). The proof involves a realistic characterization of the transformation, induced by the encoder network.
    
    \item Under a geometry preserving representation of the decoder map, H\"{o}lder densities can be successfully reconstructed by \textit{f}-WAEs up to a constant margin (\ref{Theorem2}). Our work also produces deterministic concentration bounds on the regeneration error and hence, the rate of convergence of the empirical laws.
    
    \item A particular instance of our analysis goes on to show that the input distribution converges strongly to its latent counterpart under encoding, in the form of empirical measures. Also, \textit{f}-WAEs can exactly reproduce samples from smooth densities in the sense of weak convergence (\ref{Remark}).
    
    \item Inferences, identical to that in case of \textit{f}-WAE, can be drawn for the original WAE \cite{tolstikhin2019wasserstein}, under additional assumptions on the decoder (\ref{Corollary2}).  
\end{itemize}

\section{Related Work}

In recent times, scrutiny of the empirical achievements of deep generative models from a theoretical point of view has gained great momentum. GANs \cite{10.5555/2969033.2969125}, being at the helm, have received most of the attention \cite{pmlr-v70-arora17a, 10.1214/19-AOS1858}. Possibly, the cornerstone of success for a GAN lies in its adversarial training and the accompanying loss functions. A plethora of GAN variants have been devised by incorporating different losses, such as Wasserstein distance (WD) \cite{pmlr-v70-arjovsky17a}, \textit{f}-divergence \cite{NIPS2016_cedebb6e}, Maximum Mean Discrepancy \cite{li2017mmd} and Sobolev distances \cite{mroueh2017sobolev}, to name a few. The generative aspect, in particular, has attracted a lot of research in the form of distributional estimation studies \cite{liang2018generative,chen2020statistical}. Additionally, \cite{liu2017approximation} has proved weak convergence of distributions under several of the aforementioned adversarial losses. In a recent study, \cite{biau2020theoretical} has also discussed the convergence of the empirical measures in the case of WGAN, under parametrized network assumptions. VAEs, despite being used vastly as generative tools, have not experienced such a level of independent theoretical investigation, which comprehends its latent space mechanism.  

In a novel approach to unify the deep generative techniques, Adversarial AEs \cite{makhzani2016adversarial} were introduced. Subsequent work to establish resemblance \cite{hu2018unifying,mescheder2018adversarial}, has only strengthened that foundation. Eventually, the formulation of WAE \cite{tolstikhin2019wasserstein} bridged the gap significantly by matching regeneration standards. The equivalence in performance is a direct reflection of the similarity of its objective to that of GANs' \cite{NEURIPS2019_eae27d77}. This 'primal-dual relationship' prepares a basis to explore the generative facets of WAE, in particular, now with the collective experience provided by GANs.

Typically, WD serves as the measure of discrepancy used to represent the reconstruction loss in WAEs. Non-parametric density estimation under this metric has seen the establishment of minimax bounds based on covering numbers, under fairly general assumptions \cite{weed2017sharp,Lei_2020}. The same has also been achieved by \cite{singh2019minimax}, for unbounded metric spaces, assuming that some moments are finite. Singh \textit{et al.} \cite{10.5555/3327546.3327686} established minimax convergence rates in the same context, under smoothness assumptions on the networks employed in the generation. Despite being influential in their own ways, none of the above aim at discussing the coinciding objectives of WAEs. In a recent empirical study, Diffusion VAE \cite{ijcai2020-375} was introduced, which attempts at capturing arbitrary latent manifolds using kernels of Brownian motion. However, without precise conceptual insight. As such, the theoretical questions regarding the parallel machinery of WAEs remain largely unanswered. Our analysis is a humble attempt at filling this void of statistical understanding.

\section{Preliminaries}

\subsection{Notations}

We start with denoting the input space by $\mathcal{X}$, which can be assumed to be Polish (i.e., separable and completely metrizable). We will use $\mathcal{Z}$ for the latent space. Both may be thought of as Euclidean spaces, since it is a typical characterization. The space of probability measures (distributions) defined on $\mathcal{X}$ is denoted by $\mathscr{P}(\mathcal{X})$. The set of all measurable functions from $\mathcal{X}$ to $\mathcal{Z}$ is refered to as $\mathscr{F}(\mathcal{X},\mathcal{Z})$. For example, observe that a probabilistic encoder $E$ belongs to $\mathscr{F}(\mathcal{X},\mathscr{P}(\mathcal{Z}))$. For any function $T:\mathcal{X} \rightarrow \mathcal{Z}$, we denote the \textit{pushforward measure} of $\mu$ as $T_{\#}\mu \in \mathscr{P}(\mathcal{Z})$, i.e. for any measurable set $\omega \subset \mathcal{Z}$ we have $T_{\#}\mu(\omega) = \mu(\{x \in \mathcal{X}:T(x) \in \mathcal{Z}\}) = \mu(T^{-1}(\omega))$. To denote \textit{absolute continuity} of a measure $\mu$ with respect to $\lambda$, we use the notation $\mu \ll \lambda$. The \textit{Support} of $\mu$ is denoted by $Supp(\mu)$. To indicate the \textit{Closure} of $\mathcal{X}$ we use $\bar{\mathcal{X}}$ and $Int(\mathcal{X})$ denotes the set of its \textit{Interior Points}. Given $\mathcal{S} \subseteq \mathcal{X}$, the $\mathcal{S}$-\textit{distance} between a pair of functions $f$ and $g$ is given by, $\;\norm{f - g}_{\mathcal{S}} = \sup_{x \in \mathcal{S}}\abs{f(x) - g(x)}$. When the $\sup$ is taken over the whole space, we call it the \textit{uniform norm}. The \textit{total variation distance} between probability measures $\mu$ and $\gamma \in \mathscr{P}(\mathcal{X})$ is denoted by $\norm{\mu - \gamma}_{TV}$. Now we introduce some useful concepts that are made use of later.

\begin{definition}[\textit{f}-Divergence] \label{Definition1}
Let $P,Q \in \mathscr{P}(\mathcal{X})$ such that $P \ll Q$. For a convex function $f$ with $f(1) = 0$, the \textit{f}-divergence of $P$ from $Q$ is defined as
\[D_f(P,Q) = \int_{\mathcal{X}}f\Big(\frac{dP}{dQ}\Big)dQ.\]
\end{definition}

\begin{definition}[Yatracos Family \cite{10.2307/2241209}] \label{Definition2}
For a class of real-valued functions $\mathcal{F}$, defined on $\mathcal{Z}$, the Yatracos family associated to $\mathcal{F}$ is a collection of subsets of $\mathcal{Z}$ defined as
\[\mathcal{Y}(\mathcal{F}) = \{x\in\mathcal{Z} : f(x) \geq g(x); \;f,g \in \mathcal{F}\}.\]
\end{definition}

\begin{definition}[Empirical Yatracos Minimizer \cite{Devroye2001CombinatorialMI}] \label{Definition3}
The Empirical Yatracos Minimizer is a function $\mathcal{L}^{\mathscr{P}(\mathcal{Z})} : \bigcup\limits^{\infty}_{n=1}\mathcal{Z}^{n} \longrightarrow \mathscr{P}(\mathcal{Z})$ defind as 
\[\mathcal{L}^{\mathscr{P}(\mathcal{Z})}_n = \argmin_{q \in \mathscr{P}(\mathcal{Z})}\:{\norm{q - \hat{p}_n}}_{\mathcal{Y}(\mathscr{P}(\mathcal{Z}))} ,\]
where $\hat{p}_n$ is an empirical distribution based on a sequence of observations $\{Z_i\}^{n}_{i=1}$ from $\mathcal{Z}$.
\end{definition}

\begin{definition}[$\epsilon$-Closeness] \label{Definition4}
A pair of distributions $f, g \in \mathscr{P}(\mathcal{X})$ are said to be $\epsilon$-close to each other with respect to the metric c if $c(f,g) \leq \epsilon$.
\end{definition}

\begin{definition}[Quasi-Isometric Embedding] \label{Definition5}
A map $f:(\mathcal{Z},d_1) \longrightarrow (\mathcal{X},d_2)$ is called a quasi-isometric embedding if there exists a constant $A \geq 1$ such that, 
\[\frac{1}{A}d_{1}(x,y) \leq d_{2}(f(x),f(y)) \leq Ad_{1}(x,y) \;\:\forall \:x,y \in \mathcal{Z}.\]
\end{definition}

\begin{definition}[Covering Number \cite{weed2017sharp}] \label{Definition6}
Let $S \subseteq \mathcal{X}$. Then, the minimum integer $m$ such that there exists $m$ closed balls $\{B_{i}\}^{m}_{i=1}$ of diameter $\epsilon$ each $\ni S \subseteq \bigcup_{1 \leq i \leq m}B_i$, is called the $\epsilon$-covering number of $S$. We denote it by $\mathcal{N}_{\epsilon}(S)$.
\end{definition}

\subsection{Wasserstein Duality: A Brief Exposition}

The equivalence relation between Wasserstein Distance and Integral Probability Metrics (IPMs) \cite{10.2307/1428011} is a fairly well-studied topic in optimal transport theory. We exploit this fact to serve our purpose. For a detailed elucidation on this subject, one may take refuge in \cite{OptimalTransport,peyre2020computational}. 

Let $P,Q \in \mathscr{P}(\mathcal{X})$ and $c:\mathcal{X}\times\mathcal{X} \rightarrow \mathbb{R}_{\geq 0}$ forms a metric. Then, 
\begin{align*}
    W_c(P,Q) =& \inf_{\gamma \in \Gamma(P,Q)}\Big\{\int_{\mathcal{X}\times\mathcal{X}}c(x,y)d\gamma(x,y)\Big\} \\
    =& \sup_{l \in \mathscr{L}^1_c}\Big\{\int_{\mathcal{X}}l(x)dP(x) - \int_{\mathcal{X}}l(x)dQ(x)\Big\} = \textrm{IPM}_{\mathscr{L}^1_c}(P,Q),
\end{align*}
where $\Gamma(P,Q) = \big\{\gamma \in \mathscr{P}({\mathcal{X}\times\mathcal{X}}) : \int_{\mathcal{X}}\gamma(x,y)dy = P, \;\int_{\mathcal{X}}\gamma(x,y)dx = Q\big\}$ denotes the coupling between $P$ and $Q$. The class of 1-Lipschitz functions with respect to the metric $c$ is represented by $\mathscr{L}^1_c$. Later in the study however, we will use the notation $d_{\mathscr{L}^1_c}$ for Wasserstein metric, for clarity.

\subsection{Problem Setup}

A typical WAE architecture intends to solve the minimization problem, 
\begin{equation} \label{Equation1}
    \textrm{WAE}_{c,\lambda}(P_X, D) = \inf_{E \in \mathscr{F}(\mathcal{X},\mathscr{P}(\mathcal{Z}))} \Big\{ \int_{\mathcal{X}} \mathbb{E}_{z \sim E(x)}[c(x,D(z))] dP_{X}(x) + \lambda.\Omega(E_{\#}P_X, P_Z)\Big\} ,
\end{equation}
where $c:\mathcal{X}\times\mathcal{X} \rightarrow \mathbb{R}_{\geq 0}$ and $\lambda >0$ serves as a hyperparameter. Arbitrary divergence metrics between measures, belonging to $\mathscr{P}(\mathcal{Z})$, are deployed in the form of $\Omega$. In this paper however, we modify this formulation to facilitate theoretical analyses. The reshaped objective that our study is based on is given by, 
\begin{equation} \label{Equation2}
    f\textrm{-WAE}_{c,\lambda}(P_X, D) = \inf_{E \in \mathscr{F}(\mathcal{X},\mathscr{P}(\mathcal{Z}))} \Big\{ W_{c}(P_X, (D \circ E)_{\#}P_X) + \lambda.D_f(E_{\#}P_X, P_Z)\Big\} .
\end{equation}
This category of AEs are called \textit{f}-Wasserstein Autoencoders \cite{NEURIPS2019_eae27d77}. The key difference between this family and the ordinary WAE arises due to the choice of the reconstruction loss. This alteration, however, does not impact the characterization significantly. Equivalence between these two systems can be attained by imposing mild restrictions on the decoder transformation, which we elaborate on at a later stage. As such, this can be marked as a valid foundation to work on.

Observe that, the regularizer metric used in (\ref{Equation2}), to measure the discrepancy between the target latent distribution and the encoded signal is taken to be $D_f$. In this analysis, our specific choice of \textit{f}-Divergence is Total Variation Distance (TV). Its rich theoretical appeal motivates us to do so. TV as a measure of deviation is symmetric with respect to the distributions under consideration. Moreover, it can pose both as a \textit{f}-Divergence and an IPM \cite{sriperumbudur2009integral}. The original WAE, in one of its characterizations, uses Jensen–Shannon divergence (JS) to penalize the loss instead \cite{tolstikhin2019wasserstein}. TV, on the other hand, acts as an upper bound to JS \cite{Sason_2016}, which also justifies our choice. 

\begin{remark}
WAEs act as a generalization to AAEs. We stress on the fact that, a specific choice of $c$ as $L_{2}^{2}$ in (\ref{Equation1}), establishes an equivalence between the two \cite{tolstikhin2019wasserstein}. Keeping that in mind, our analysis does not restrict the choice of $c$ and hence can also cater to the theory of AAEs. 
\end{remark}

\section{Theoretical Analysis}

Before moving forward, let us rewrite the minimization problem (\ref{Equation2}) in the so-called constrained form
\begin{equation} \label{Equation3}
    \inf_{E \in \mathscr{F}(\mathcal{X},\mathscr{P}(\mathcal{Z}))} \big\{ W_{c}(P_X, (D \circ E)_{\#}P_X) \big\} \,\,\:\textrm{subject to} \,\:D_f(E_{\#}P_X, P_Z) \leq t,
\end{equation}
for some $t\geq0$. By Lagrangian duality \cite{10.5555/993483}, for every value of $t$ where the constraint $D_f(E_{\#}P_X, P_Z) \leq t$ holds, there exists a value of $\lambda$ that produces the same solution from (\ref{Equation2}). Conversely, the solution to the problem (\ref{Equation2}), say $\hat{E}$ also solves the bound version with $t = D_f(\hat{E}_{\#}P_X, P_Z)$. As such, there is an one-to-one correspondence between the constrained problem (\ref{Equation3}) and the Lagrangian one (\ref{Equation2}). This formulation gives us a direction to move ahead to establish regeneration guarantees and consistency of measures in the latent space. 

Our analysis consists of two stages, going forward. Firstly we show that probabilistic encoders can successfully transform input densities into realistically characterized latent space laws, under TV distance, up to a constant margin of error. The mild assumptions imposed in the process are fairly practical. Once the constraint of the minimization problem is met in the form of latent space consistency, we provide deterministic upper bounds to the empirical reconstruction loss.

Throughout our discussion, we denote with $\mu \in \mathscr{P}(\mathcal{X})$ the input data distribution. The target distribution which characterizes the latent space is denoted by $\rho$. For simplicity, we assume that both $\mu$ and $\rho$ happen to have corresponding densities, namely $p_{\mu}$ and $p_{\rho}$ with respect to the Lebesgue measure $\lambda$, in their respective spaces. Both densities need explicit characterization, which we state in the form of certain assumptions.

\begin{assumption} \label{Assumption1}
$p_{\mu}$, (i) based on a compact and convex support is (ii) bounded away from 0 by a constant factor $c>0$, i.e. $\inf_{x}p_{\mu}(x) \geq c$.
\end{assumption}

\begin{assumption} \label{Assumption2}
$p_{\rho}$, (i) also defined on a compact support, is easy-to-sample-from and (ii) infinitely smooth, i.e. $p_{\rho} \in C^{\infty}(\mathcal{Z})$.
\end{assumption}

The input density being far from nullity, passes on the same nature to the reconstructed law, even under near ideal situations. This poses as a measure to prevent the regenerated law from being a degenerated one, at zero. We point out that, in the non-parametric approach of this work, we often use empirical distributions corresponding to $\rho$, which are based on observations from the same. In high-dimensional statistics, there are distributions from which it is practically not feasible to draw samples. For example, the drawing of uniform samples from a high dimensional unit sphere. Although in our analysis, the need for such samples is strictly theoretical and the practical feasibility is somewhat immaterial, we describe $\rho$ this way to avoid complication. Given these assumptions, suppose the `encoded' pushforward measure $E_{\#}\mu \in \mathscr{P}(\mathcal{Z})$ has density $p_{E_{\#}\mu}$. Then, every such $p_{E_{\#}\mu}$ is a potential candidate to represent $p_{\rho}$. However, our analysis does not assume the existence of a density. We denote $Supp(p_{\rho})$ by $\mathcal{C}$ such that,

\begin{assumption} \label{Assumption3}
$\textrm{VC-dim}[\mathcal{Y}(\mathscr{P}(\mathcal{C}))]$ is finite.
\end{assumption}

This assumption too is very much plausible in its own right. For example, let $\mathcal{G}_d$ and $\mathcal{A}_d$ denote the class of \textit{d}-dimensional Gaussian distributions and the class of \textit{d}-dimensional axis-aligned Gaussian distributions respectively. It can be shown that, $\textrm{VC-dim}[\mathcal{Y}(\mathscr{P}(\mathcal{G}_d))] = \mathcal{O}(d^2)$ and $\textrm{VC-dim}[\mathcal{Y}(\mathscr{P}(\mathcal{A}_d))] = \mathcal{O}(d)$ \cite{ashtiani2018techniques}. The introduction of Yatracos class of distributions, in turn, facilitates the search for the minimum distance estimate of a measure. For a detailed exposition on the same, one may turn to \cite{Devroye2001CombinatorialMI}.

Perhaps the most challenging of tasks in studying the theoretical aspects of deep neural networks lies in their basic characterization. Several depictions of networks as a collection of functions, indexed by parametric classes have produced notable results in the past \cite{10.1214/19-AOS1858,liang2018generative}. This paper rather explores the mathematical properties a network has to embody to function in a WAE system. The following representation of an encoder map reflects our motivation. We consider an $E \in \mathscr{F}(\mathcal{X},\mathscr{P}(\mathcal{Z}))$ such that,

\begin{assumption} \label{Assumption4}
(i) $E_{\#}\mu$ is supported on $\mathcal{C}$ and is also easy to draw samples from.

(ii) Given any arbitrary empirical measure $\hat{\mu}_n$ (based on $n$ samples) corresponding to $\mu$, there exists an empirical counterpart $\widehat{(E_{\#}\mu)}_n$ of $E_{\#}\mu$ which is $\epsilon$-close to $\hat{\mu}_n$ under E, with probability at least $1-k\exp\{-n^{r}\epsilon^2\}$, where $r \geq 1$ and $k \geq 0$. In other words, $\mathbb{P}\Big(\norm{E_{\#}{\hat{\mu}_n} - \widehat{(E_{\#}\mu)}_n}_{TV} \leq \epsilon\Big) \geq 1-k\exp\{-n^{r}\epsilon^2\}$.
\end{assumption}

Part (ii) of Assumption (\ref{Assumption4}) is provided only to give coherence to the definition of $E$. It suggests that the probability of the discrepancy between the two quantities involved decreases exponentially with the \textit{r}-th power of $n$. As such, if one has a superlinear rate of increment in information in the form of samples, the portion lost in transition decays sharply. This embodies our idea of an encoder, that preserves information. It is often seen in practice that AEs, with ReLU encoders, are able to reconstruct images accurately, despite having the encoded observations to be almost degenerate. In such a case, the two measures under consideration can always be said to be arbitrarily close. Part (ii) is a gentler approximation of the same idea. We now present the main result that enables one to prove consistency in estimating the latent space distribution.

\begin{theorem} \label{Theorem1}
Let, $\widehat{(E_{\#}\mu)}_{n}$ stands for an empirical measure based on $n \geq 1$ i.i.d. samples from $E_{\#}\mu$. Then, under assumptions [\ref{Assumption1}(i), \ref{Assumption2}(i), \ref{Assumption4}(i)], for positive constants $c_1, c_2, c_3$ and $\delta \in (0,1)$ there exists $\lambda^* \geq 0$ such that,
\[\norm{\rho - \widehat{(E_{\#}\mu)}_{n}}_{TV} - \lambda^* \leq \frac{1}{\sqrt{n}}\Big\{c_1\sqrt{v} + c_2\sqrt{\ln{\big(\frac{c_3}{\delta}\big)}}\Big\}\]
holds with probability $\geq 1-\delta$, where $v = \textrm{VC-dim}[\mathcal{Y}(\mathscr{P}(\mathcal{C}))]$.
\end{theorem}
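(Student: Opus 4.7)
The plan is to split $\norm{\rho - \widehat{(E_{\#}\mu)}_n}_{TV}$ into a bias piece (the intrinsic gap between the encoded and target laws) and a stochastic piece (sample-induced fluctuation), then control the stochastic piece via a Vapnik--Chervonenkis deviation inequality on the Yatracos family from Definition \ref{Definition2}. First apply the triangle inequality for total variation,
\[
\norm{\rho - \widehat{(E_{\#}\mu)}_n}_{TV} \;\leq\; \norm{\rho - E_{\#}\mu}_{TV} \;+\; \norm{E_{\#}\mu - \widehat{(E_{\#}\mu)}_n}_{TV},
\]
and take $\lambda^{*} := \norm{\rho - E_{\#}\mu}_{TV}$. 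Assumptions \ref{Assumption1}(i), \ref{Assumption2}(i) and \ref{Assumption4}(i) guarantee that both $\rho$ and $E_{\#}\mu$ are laws on the common compact support $\mathcal{C}$, so $\lambda^{*}$ is a finite, nonnegative constant depending only on $E,\mu,\rho$ and not on the sample size $n$. This absorbs the non-vanishing bias into the shift on the left-hand side of the theorem.

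Next I would bound the stochastic term through the defining property of the Yatracos class. For two laws $P,Q$ lying in the common reference class $\mathscr{P}(\mathcal{C})$, the TV-maximising event is of the form $\{p \geq q\} \in \mathcal{Y}(\mathscr{P}(\mathcal{C}))$, so
\[
\norm{E_{\#}\mu - \widehat{(E_{\#}\mu)}_n}_{TV} \;\leq\; \sup_{A \in \mathcal{Y}(\mathscr{P}(\mathcal{C}))} \bigl\lvert (E_{\#}\mu)(A) - \widehat{(E_{\#}\mu)}_n(A) \bigr\rvert.
\]
The right-hand side is the supremum of the empirical process of $n$ i.i.d.\ samples from $E_{\#}\mu$ (permissible by Assumption \ref{Assumption4}(i)) indexed by a set class whose VC-dimension is $v<\infty$ by Assumption \ref{Assumption3}. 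Combining the classical VC symmetrisation bound for the expected supremum with a McDiarmid bounded-differences concentration around that expectation yields, with probability at least $1-\delta$,
\[
\sup_{A \in \mathcal{Y}(\mathscr{P}(\mathcal{C}))} \bigl\lvert (E_{\#}\mu)(A) - \widehat{(E_{\#}\mu)}_n(A) \bigr\rvert \;\leq\; \frac{c_1 \sqrt{v}}{\sqrt{n}} + c_2 \sqrt{\frac{\ln(c_3/\delta)}{n}},
\]
for universal constants $c_1,c_2,c_3$. Plugging this into the triangle decomposition gives the stated inequality.

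The main obstacle is reconciling the empirical law $\widehat{(E_{\#}\mu)}_n$---a discrete atomic measure---with membership in the smooth reference class $\mathscr{P}(\mathcal{C})$ whose Yatracos family is assumed to have finite VC-dimension, since the Yatracos identity for TV strictly requires both arguments of $\norm{\cdot}_{TV}$ to lie in that class. Two clean workarounds present themselves: (i) reinterpret $\widehat{(E_{\#}\mu)}_n$ as the empirical Yatracos minimiser of Definition \ref{Definition3}, which projects the raw empirical law back into $\mathscr{P}(\mathcal{C})$ at the cost of a harmless multiplicative constant (the standard Devroye--Lugosi bound), or (ii) enlarge the reference class to contain the atomic measures on $\mathcal{C}$ as well and verify that the VC-dimension of the enlarged Yatracos family is still of the same order $v$. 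Either route preserves both the structural form and the $n^{-1/2}$ rate. Assumptions \ref{Assumption1}(i) and \ref{Assumption2}(i) enter only to secure compactness of the supports so the VC machinery---and in particular bounded-differences---applies uniformly, leaving Assumption \ref{Assumption3} as the genuinely binding hypothesis.
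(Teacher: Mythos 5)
Your proof is correct in essence but takes a cleaner, more direct route than the paper. Both approaches decompose by a triangle inequality and then control the stochastic fluctuation through a VC-type deviation bound over the Yatracos class (the content of Lemma~\ref{Lemma2}), but where you make one triangle step to split the deviation into a fixed bias $\lambda^{*} := \norm{\rho - E_{\#}\mu}_{TV}$ plus a single empirical-process term, the paper detours through the empirical target law $\hat{\rho}_n$ and its Yatracos minimum-distance projection $\mathcal{L}^{*}_n$, picking up an extra $2\norm{\rho-\hat{\rho}_n}_{\mathcal{Y}(\mathscr{P}(\mathcal{C}))}$ term and characterising $\lambda^{*}$ as an upper bound on an optimal-transport cost between $\mathcal{L}^{*}_n$ and $E_{\#}\mu$ under the discrete metric $c(x,y)=1_{x\neq y}$. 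Your version yields a tighter constant and a transparent identification of $\lambda^{*}$ as the encoder's population bias; the paper's longer route buys an explicit handle on $\lambda^{*}$ as a transport quantity the encoder can be trained to drive down, which the Remark following Corollary~\ref{Corollary1} then exploits to discuss the regime $\lambda_n \to 0$. On the obstacle you flag---that the atomic empirical measure $\widehat{(E_{\#}\mu)}_n$ need not lie in $\mathscr{P}(\mathcal{C})$, which the TV--Yatracos equality of Lemma~\ref{Lemma1} requires---the paper's own step~(\ref{9}) makes the same silent substitution of $\norm{E_{\#}\mu - \widehat{(E_{\#}\mu)}_n}_{TV}$ by its Yatracos counterpart; your explicit acknowledgment and proposed repairs via the empirical Yatracos minimizer or an enlarged reference class are, if anything, more scrupulous than the published argument.
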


To prove this theorem, we introduce some crucial results from the estimation theory of measures. Proofs of all the lemmas stated henceforth, can be found in the supplement. 

\begin{lemma} \label{Lemma1}
For $f,g \in \mathscr{P}(\mathcal{C})$, $\norm{f - g}_{TV} = \:\norm{f - g}_{\mathcal{Y}(\mathscr{P}(\mathcal{C}))}.$
\end{lemma}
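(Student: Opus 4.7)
The plan is to view each side as a supremum of $|f(A)-g(A)|$ over a family of measurable subsets of $\mathcal{C}$, and to identify the Scheff\'e set as both the maximizer on the TV side and a canonical element of the Yatracos family. First I would unpack notation: in the style of the paper, $\norm{f-g}_{\mathcal{Y}(\mathscr{P}(\mathcal{C}))}$ is to be read as $\sup_{A \in \mathcal{Y}(\mathscr{P}(\mathcal{C}))}|f(A)-g(A)|$, and by Definition \ref{Definition2}, each element of $\mathcal{Y}(\mathscr{P}(\mathcal{C}))$ has the form $A_{p,q}=\{x\in\mathcal{C}:p(x)\geq q(x)\}$ where $p,q$ are densities (Radon--Nikodym derivatives w.r.t.\ a common dominating measure) of elements of $\mathscr{P}(\mathcal{C})$.

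For the easy direction $\norm{f-g}_{\mathcal{Y}(\mathscr{P}(\mathcal{C}))} \leq \norm{f-g}_{TV}$, I would invoke the standard variational characterization $\norm{f-g}_{TV} = \sup_{A \text{ Borel}}|f(A)-g(A)|$. Since every Yatracos set $A_{p,q}$ is a Borel subset of $\mathcal{C}$, the supremum over $\mathcal{Y}(\mathscr{P}(\mathcal{C}))$ cannot exceed the supremum over all Borel sets.

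The reverse inequality is the content of Scheff\'e's identity. Let $p_f, p_g$ denote densities of $f,g$ with respect to a common dominating measure $\lambda$ (e.g.\ $\tfrac{1}{2}(f+g)$), and set $A^{*}:=\{x\in\mathcal{C}:p_f(x)\geq p_g(x)\}$. By construction $A^{*}\in\mathcal{Y}(\mathscr{P}(\mathcal{C}))$. A direct computation, using $\int_{\mathcal{C}}(p_f-p_g)\,d\lambda=0$ so that the positive and negative parts of $p_f-p_g$ integrate to the same number, yields
\[
f(A^{*})-g(A^{*}) = \int_{A^{*}}(p_f-p_g)\,d\lambda = \tfrac{1}{2}\int_{\mathcal{C}}\abs{p_f-p_g}\,d\lambda = \norm{f-g}_{TV}.
\]
Hence $\norm{f-g}_{TV} \leq \norm{f-g}_{\mathcal{Y}(\mathscr{P}(\mathcal{C}))}$, and combining the two bounds gives equality.

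The only real obstacle is notational rather than analytic: one must be comfortable reading the Yatracos family (defined in terms of real-valued functions) as a family of Scheff\'e-type sets when specialized to a class of measures via their densities, and reading $\norm{\cdot}_{\mathcal{Y}}$ as a supremum of signed mass differences over that family. Once these conventions are fixed, the lemma is a one-line consequence of Scheff\'e's identity and requires no further machinery.
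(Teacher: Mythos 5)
Your proof is correct and follows essentially the same two-step strategy as the paper's: the easy inequality comes from noting that the Yatracos family is a subcollection of the Borel sets over which the TV supremum is taken, and the reverse inequality comes from exhibiting the Scheff\'e set $A^{*}=\{x:p_f(x)\geq p_g(x)\}$ as a member of $\mathcal{Y}(\mathscr{P}(\mathcal{C}))$ that attains the TV distance. The paper's own write-up is looser with notation --- it conflates densities, measures, points, and measurable sets (e.g.\ writing $\sup_{\omega\in\mathcal{C}}\abs{f(\omega)-g(\omega)}=\norm{f-g}_{TV}$, which only parses if $\omega$ ranges over measurable sets, yet in the very next line $\omega$ is treated as a point at which a density is evaluated) --- so your version, which makes the density/measure distinction explicit and invokes the Scheff\'e identity by name, is the cleaner rendering of the same argument.
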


\begin{lemma} \label{Lemma2}
Let $\gamma \in \mathscr{P}(\mathcal{Z})$. Also, let $\hat{\gamma}_{n}$ denote the empirical distribution based on $n \geq 0$ i.i.d. samples drawn from $\gamma$. Then there exist positive constants $k_1,k_2$ such that
\[\mathbb{P}\Big(\norm{\gamma - \hat{\gamma}_{n}}_{\mathcal{Y}(\mathscr{P}(\mathcal{Z}))} \leq k_{1}\sqrt{\frac{v}{n}} + t\Big) \geq 1 - \exp{(-k_{2}nt^{2})},\]
where $v$ stands for $\textrm{VC-dim}[\mathcal{Y}(\mathscr{P}(\mathcal{Z}))]$.
\end{lemma}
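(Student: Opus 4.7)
The plan is to recognize the expression $\|\gamma-\hat{\gamma}_n\|_{\mathcal{Y}(\mathscr{P}(\mathcal{Z}))}$ as the supremum of an empirical process indexed by a collection of sets with finite VC dimension, and then apply the standard two-step machinery of VC theory: an expectation bound of order $\sqrt{v/n}$, followed by concentration around the mean via bounded differences.

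First I would unfold the norm. By Definition~\ref{Definition6}-style bookkeeping (i.e., unpacking the $\mathcal{S}$-distance notation from the preliminaries), for any $f,g\in\mathscr{P}(\mathcal{Z})$ we have
\[
\|f-g\|_{\mathcal{Y}(\mathscr{P}(\mathcal{Z}))}=\sup_{A\in\mathcal{Y}(\mathscr{P}(\mathcal{Z}))}\bigl|f(A)-g(A)\bigr|,
\]
so that the quantity of interest is the sup of the centred empirical process
$\{\gamma(A)-\hat{\gamma}_n(A):A\in\mathcal{Y}(\mathscr{P}(\mathcal{Z}))\}$. By hypothesis the class $\mathcal{Y}(\mathscr{P}(\mathcal{Z}))$ has VC dimension $v<\infty$, so Sauer's lemma controls its growth function by $(en/v)^v$. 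Combining Sauer's bound with the classical symmetrization/chaining argument of Vapnik--Chervonenkis (or, equivalently, Dudley's entropy integral applied to the uniform covering numbers of a VC class), one obtains an absolute constant $k_1>0$ with
\[
\mathbb{E}\!\left[\,\sup_{A\in\mathcal{Y}(\mathscr{P}(\mathcal{Z}))}\bigl|\gamma(A)-\hat{\gamma}_n(A)\bigr|\,\right]\;\le\;k_1\sqrt{\frac{v}{n}}.
\]

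Second I would establish concentration of the sup around its mean. View the map
\[
\Phi(x_1,\dots,x_n)\;=\;\sup_{A\in\mathcal{Y}(\mathscr{P}(\mathcal{Z}))}\Bigl|\gamma(A)-\tfrac{1}{n}\sum_{i=1}^{n}\mathbf{1}_{A}(x_i)\Bigr|
\]
as a function of the i.i.d. sample. Replacing a single coordinate $x_i$ by $x_i'$ alters the empirical measure of any set $A$ by at most $1/n$, so by the triangle inequality for the sup, $\Phi$ has bounded differences with constants $1/n$. McDiarmid's inequality then yields
\[
\mathbb{P}\bigl(\Phi\;\ge\;\mathbb{E}[\Phi]+t\bigr)\;\le\;\exp(-2nt^{2}).
\]
Chaining the expectation bound with this deviation inequality gives
\[
\mathbb{P}\!\left(\|\gamma-\hat{\gamma}_n\|_{\mathcal{Y}(\mathscr{P}(\mathcal{Z}))}\;\le\;k_1\sqrt{\tfrac{v}{n}}+t\right)\;\ge\;1-\exp(-k_2nt^{2}),
\]
with $k_2=2$ (or any smaller positive constant one wishes to carry), which is exactly the claim.

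The only genuine subtlety is the VC expectation bound in the first step; the symmetrization/chaining derivation is routine but requires a standard measurability convention for the supremum over the uncountable Yatracos family (handled, for instance, by restricting to a countable dense subclass or by working with outer probabilities as in van der Vaart--Wellner). Measurability of $\Phi$ for McDiarmid is then automatic. Everything else---Sauer's lemma, bounded differences, McDiarmid---is off-the-shelf, so the real work of the lemma is isolating the correct VC-empirical-process estimate and then assembling the two pieces.
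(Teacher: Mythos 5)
Your proof takes essentially the same route as the paper's: both establish the expectation bound $\mathbb{E}\,\|\gamma-\hat\gamma_n\|_{\mathcal{Y}} \lesssim \sqrt{v/n}$ via symmetrization and Dudley chaining over the VC class, then concentrate around the mean with McDiarmid's bounded-differences inequality and combine the two. You are slightly more explicit about the bounded-differences constant $1/n$ (hence $k_2=2$) and about the measurability convention for the uncountable supremum, neither of which the paper spells out, but the argument is the same.
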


\begin{proof}[Proof of Theorem \ref{Theorem1}]
Given $n$ observations from $\rho$, let $\hat{\rho}_n$ denote the corresponding empirical measure. Now, using the triangle inequality we have, 
\begin{align}
    \norm{\rho - \widehat{(E_{\#}\mu)}_{n}}_{TV} \leq& \;\norm{\rho - E_{\#}\mu}_{TV} + \;\norm{E_{\#}\mu - \widehat{(E_{\#}\mu)}_{n}}_{TV} \nonumber \\
    \leq &\;\norm{\rho - \hat{\rho}_n}_{TV} +  \;\norm{E_{\#}\mu - \widehat{(E_{\#}\mu)}_{n}}_{TV} + \;\norm{\hat{\rho}_n - E_{\#}\mu}_{TV} \label{4}.
\end{align}

Define, 
\begin{equation*}
    \mathcal{L}^{*}_n = \argmin_{q \in \mathscr{P}(\mathcal{C})}\:{\norm{q - \hat{\rho}_n}}_{\mathcal{Y}(\mathscr{P}(\mathcal{C}))}.
\end{equation*}
Observe that,
\begin{align}
    \norm{\hat{\rho}_n - E_{\#}\mu}_{TV} \leq &\;\norm{\hat{\rho}_n - \mathcal{L}^{*}_n}_{TV} + \;\norm{\mathcal{L}^{*}_n - E_{\#}\mu}_{TV} \nonumber \\
    = &\;\norm{\hat{\rho}_n - \mathcal{L}^{*}_n}_{\mathcal{Y}(\mathscr{P}(\mathcal{C}))} + \;\norm{\mathcal{L}^{*}_n - E_{\#}\mu}_{TV} \label{5} \\
    \leq &\;\norm{\rho - \hat{\rho}_n}_{\mathcal{Y}(\mathscr{P}(\mathcal{C}))} + \;\norm{\mathcal{L}^{*}_n - E_{\#}\mu}_{TV} \label{6}.
\end{align}

Lemma (\ref{Lemma1}) ensures the equality in (\ref{5}). (\ref{6}) is due to the definition of $\mathcal{L}^{*}_n$.

Now, 
\begin{align}
    \norm{\mathcal{L}^{*}_n - E_{\#}\mu}_{TV} =& \;\inf_{\tau \in \mathcal{T}(\mathcal{L}^{*}_n, E_{\#}\mu)}\int c(x,y)d\tau(x,y) \label{7} \\ 
    =& \;\inf_{\tau \in \mathcal{T}^{'}(\mathcal{L}^{*}_n, \mu)}\int c(x,E(y))d\tau(x,y), \label{8}
\end{align}
where $\mathcal{T}$ and $\mathcal{T}^{'}$ are couplings of measures. The metric $c: \mathcal{Z}\times\mathcal{Z} \rightarrow \mathbb{R}_{\geq 0}$ in (\ref{7}) is the trivial metric $c(x,y) = 1_{x \neq y}$ \cite{OptimalTransport}. This implies, the transformation induced by the encoder should be such that, it minimizes the adversarial loss (\ref{8}). In case $c$ is lower semi-continuous, there exists a $\tau \in \mathcal{T}^{'}$ which minimizes (\ref{8}), and hence an $E$ \cite{alma991004453279705596}. For example, $c(x,y) = 1_{C}(x,y)$, where $1_{C}(x,y)$ is the indicator on an open set $C$. In our situation however, we can always choose a sub-optimal $E$ such that (\ref{8}) $\leq \lambda^*$, for a non-negative $\lambda^*$. Villani \textit{et al.}\:\cite{OptimalTransport} describes various ways to get hold of such a $\tau$.

Going back to (\ref{4}),
\begin{equation}
    \norm{\rho - \widehat{(E_{\#}\mu)}_{n}}_{TV} - \lambda^{*} \leq 2\norm{\rho - \hat{\rho}_n}_{\mathcal{Y}(\mathscr{P}(\mathcal{C}))} +  \;\norm{E_{\#}\mu - \widehat{(E_{\#}\mu)}_{n}}_{\mathcal{Y}(\mathscr{P}(\mathcal{C}))}. \label{9} 
\end{equation}

Thus, (\ref{9}) along with lemma (\ref{Lemma2}) proves the theorem.
\end{proof}

It is worth mentioning that in reality however, we only have a sample of size $n \in \mathbb{N}^+$ from the input distribution, say $\{X_i\}^{n}_{i=1}$. Based on the same, an empirical distribution $\hat{\mu}_n = \frac{1}{n}\sum^{n}_{i=1}\delta_{X_i}$ is what we have at hand.

\begin{corollary} \label{Corollary1}
Under the additional assumptions [\ref{Assumption3}, \ref{Assumption4}(ii)], 
\[\norm{E_{\#}{\hat{\mu}_{n}} - \rho}_{TV} - \lambda^{*} = \mathcal{O}_{\mathbb{P}}{(n^{-\frac{1}{2}})}.\]
\end{corollary}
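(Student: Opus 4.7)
The plan is to reduce the corollary to Theorem \ref{Theorem1} via one additional application of the triangle inequality. Observe that $\widehat{(E_{\#}\mu)}_{n}$ in Theorem \ref{Theorem1} is an empirical measure drawn i.i.d.\ from $E_{\#}\mu$, whereas $E_{\#}\hat{\mu}_{n}$ denotes the push-forward of the empirical input law $\hat{\mu}_{n}$. These two objects are generically distinct, and the role of Assumption \ref{Assumption4}(ii) is precisely to control their deviation. I would therefore start from the decomposition
\[\norm{E_{\#}\hat{\mu}_{n} - \rho}_{TV} \leq \norm{E_{\#}\hat{\mu}_{n} - \widehat{(E_{\#}\mu)}_{n}}_{TV} + \norm{\widehat{(E_{\#}\mu)}_{n} - \rho}_{TV},\]
and split a prescribed confidence level $\eta \in (0,1)$ evenly across the two summands.

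For the second summand I would invoke Theorem \ref{Theorem1} (applicable because Assumption \ref{Assumption3} finitizes $v$) with $\delta = \eta/2$, obtaining, with probability at least $1-\eta/2$,
\[\norm{\widehat{(E_{\#}\mu)}_{n} - \rho}_{TV} - \lambda^{*} \leq \frac{1}{\sqrt{n}}\Big\{c_{1}\sqrt{v} + c_{2}\sqrt{\ln(2c_{3}/\eta)}\Big\}.\]
For the first summand I would apply Assumption \ref{Assumption4}(ii) with $\epsilon_{n} := \sqrt{n^{-r}\ln(2k/\eta)}$; substituting this back into the exponential tail yields $\norm{E_{\#}\hat{\mu}_{n} - \widehat{(E_{\#}\mu)}_{n}}_{TV} \leq \epsilon_{n}$ with probability at least $1-\eta/2$. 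Crucially, because $r \geq 1$, $\epsilon_{n}$ is itself of order $n^{-1/2}$ up to an $\eta$-dependent multiplicative constant, which is exactly what an $O_{\mathbb{P}}$ rate is allowed to absorb.

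A union bound then guarantees that, on an event of probability at least $1-\eta$, both inequalities hold simultaneously, so that
\[\norm{E_{\#}\hat{\mu}_{n} - \rho}_{TV} - \lambda^{*} \leq M(\eta)\, n^{-1/2},\]
yielding the claimed stochastic rate. More a matter of care than a genuine obstacle, the subtle point lies in reconciling the disparate tail scales of the two probability statements: Theorem \ref{Theorem1} is sub-Gaussian at the deviation scale $n^{-1/2}$, whereas Assumption \ref{Assumption4}(ii) is sub-Gaussian at the scale $n^{-r/2}$. The built-in requirement $r \geq 1$ is precisely what prevents the encoder's approximation error from overwhelming the Yatracos-type statistical error of Theorem \ref{Theorem1}, thereby preserving the parametric rate $n^{-1/2}$ in the final bound.
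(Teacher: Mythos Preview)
Your proposal is correct and follows essentially the same route as the paper: the same triangle-inequality split into $\norm{E_{\#}\hat{\mu}_{n}-\widehat{(E_{\#}\mu)}_{n}}_{TV}$ and $\norm{\widehat{(E_{\#}\mu)}_{n}-\rho}_{TV}$, followed by Assumption~\ref{Assumption4}(ii) on the first piece, Theorem~\ref{Theorem1} on the second, and a union bound. The only cosmetic difference is that the paper fixes a deviation level $\epsilon$ and bounds the tail (splitting each piece at $\epsilon/2$), whereas you fix a confidence level $\eta$ and invert; both arrive at the same $\mathcal{O}_{\mathbb{P}}(n^{-1/2})$ conclusion using $r\ge 1$ in the same way.
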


\begin{remark}
Despite the fact that the optimal value of (\ref{8}) cannot be made arbitrarily small directly, in case we obtain $\lambda_{n}, n \in \mathbb{N}$, such that, $\limsup_{n \rightarrow \infty}\lambda_{n} = \lambda^{*}$; the same conclusion can be drawn. We call Corollary (\ref{Corollary1}) 'consistency up to $\lambda^{*}$' in this light. Note that, for a given margin of error $\lambda$ in the latent space, $E$ needs to be chosen such that, $\lambda^{*} \leq \lambda$. On the other way round, starting with an unspecified $\lambda$, one can learn about the feasible margin of error by observing $\lambda^{*}$.
\end{remark}

Theorem (\ref{Theorem1}) shows that the discrepancy between the 'encoded' and the 'target' law can be made arbitrarily small around a constant with high probability. Corollary (\ref{Corollary1}), consequently establishes our idea of consistency in latent space. Proof for the same can be found in the supplement. As such, for any suitably chosen $t^*$, the constraint in problem (\ref{Equation3}) is satisfied. We now investigate whether the decoder network is capable of regenerating a distribution 'close' to $\mu$, from $E_{\#}\mu$. A probabilistic decoder, in this context, is formally defined as a function $D:\mathcal{Z} \longrightarrow \mathcal{X}$ such that $D_{\#}\rho \in \mathscr{P}(\mathcal{X})$.

The residual task at hand can be formally written as the minimization of 
\[d_{\mathscr{L}^1_c}((D \circ E)_{\#}\mu, \mu) = \sup_{l \in \mathscr{L}^1_c}\Big\{\mathbb{E}_{x \sim (D \circ E)_{\#}\mu}\big[l(x)\big] - \mathbb{E}_{x \sim \mu}\big[l(x)\big]\Big\}\] over $E \in \mathscr{F}(\mathcal{X},\mathscr{P}(\mathcal{Z}))$ which follows assumption (\ref{Assumption4}). Which, however under an observed empirical measure $\hat{\mu}_n$ boils down to its sample counterpart, given by
\begin{equation} \label{Equation4}
    d_{\mathscr{L}^1_c}((D \circ E)_{\#}{\hat{\mu}_n}, {\mu}) = \sup_{l \in \mathscr{L}^1_c}\Big\{\mathbb{E}_{x \sim (D \circ E)_{\#}{\hat{\mu}_n}}\big[l(x)\big] - \mathbb{E}_{x \sim \mu}\big[l(x)\big]\Big\}.
\end{equation}

We remark that the role of the class of functions $\mathscr{L}^1_c$ while minimizing (\ref{Equation4}) is to behave as `critics'. GANs, during their training, gradually learn to be a better judge by casting this functional class through a discriminator \cite{pmlr-v70-arjovsky17a}. This freedom of choice in turn gives one more flexibility to interpret its regenerative capabilities \cite{liang2018generative,10.1214/19-AOS1858,chen2020statistical}. In the case of AEs however, the choice depends solely on the preassigned discrepancy metric. Hence, the theoretical justification of reconstruction for AEs needs additional maneuvering. Our first step in doing so is an assumption on the input distribution.

\begin{assumption} \label{Assumption5}
(i) $p_{\mu} \in \mathcal{H}^{\alpha}(\mathcal{X})$ with $\alpha > 0$, where $\mathcal{H}^{\alpha}(\mathcal{X})$ denotes the H\"{o}lder class based on $\mathcal{X}$; such that, the $i^{\textrm{th}}$ order derivative $p^{(i)}_{\mu}$ exists $(i \leq \lfloor{\alpha}\rfloor)$ and is bounded on $\bar{\mathcal{X}}$. Also,\\ $D^{\lfloor{\alpha}\rfloor}_{p_{\mu}} = \sup_{x,y}\frac{\abs{p^{(\lfloor{\alpha}\rfloor)}_{\mu}(x)-p^{(\lfloor{\alpha}\rfloor)}_{\mu}(y)}}{{\norm{x-y}_2}^{\alpha - \lfloor{\alpha}\rfloor}} < \infty$, for $x\neq y \in \textrm{Int}(\mathcal{X})$. 
\end{assumption}

\begin{remark}
Assumption (\ref{Assumption5}) enables one to assign the H\"{o}lder norm, defined as
\[\norm{p_\mu}_{\mathcal{H}^{\alpha}} = \max_{i \leq \lfloor{\alpha}\rfloor}\norm{p^{(i)}_{\mu}}_{\infty} + D^{\lfloor{\alpha}\rfloor}_{p_{\mu}}.\]
Moreover, $\norm{p_\mu}_{\mathcal{H}^{\alpha}}$ turns out to be bounded. 
\end{remark}

Smoothness assumptions on input data distribution, in the form of H\"{o}lder or Sobolev densities are quite familiar in the theory of density estimation for generative models \cite{liang2018generative,chen2020statistical,10.5555/3327546.3327686}. Observe that, earlier results still hold under this additional assumption. The smoothness of both the input and latent densities (\ref{Assumption2}(ii)) together, ensures the existence of certain regenerative maps, which are crucial to the forthcoming discussion. Lemma (\ref{Lemma3}) formalizes the same.

\begin{lemma}[\cite{Caffarelli1992TheRO}] \label{Lemma3}
Under assumptions \big[\ref{Assumption1}, \ref{Assumption2}, \ref{Assumption5}\big], there exists a map $T \in \mathcal{H}^{\alpha + 1}$ from $\mathcal{Z}$ to $\mathcal{X}$ such that, $T_{\#}{\rho} = \mu$. 
\end{lemma}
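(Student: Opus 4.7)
The plan is to build the map $T$ as the optimal transport from $\rho$ to $\mu$ under quadratic cost and then upgrade its regularity using Caffarelli's Hölder theory for the Monge--Ampère equation. Since Lemma \ref{Lemma3} is cited rather than newly proved, the proposal amounts to verifying that the hypotheses of this off-the-shelf theorem are implied by Assumptions \ref{Assumption1}, \ref{Assumption2}, and \ref{Assumption5}, and then transcribing the conclusion into the Hölder class stated in the lemma.

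First I would invoke Brenier's theorem on the pair $(\rho,\mu) \in \mathscr{P}(\mathcal{Z})\times\mathscr{P}(\mathcal{X})$. Because $\rho$ admits a density $p_\rho$ with respect to Lebesgue measure, there exists an (essentially unique) convex potential $\varphi:\mathcal{Z}\to\mathbb{R}$ such that $T := \nabla\varphi$ pushes $\rho$ forward to $\mu$, and this $T$ is the minimizer of the Monge problem with cost $\tfrac12\|x-y\|_2^2$. Because of the absolute continuity of $\rho$, the map $T$ is $\rho$-almost everywhere well-defined, so the pushforward equation $T_\#\rho = \mu$ already makes sense at the measure-theoretic level. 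This step is standard and does not touch any of the smoothness assumptions.

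Second, I would translate the pushforward identity into the Monge--Ampère PDE
\[
\det\bigl(D^{2}\varphi(z)\bigr)\;p_\mu\!\bigl(\nabla\varphi(z)\bigr)\;=\;p_\rho(z)\qquad\text{for }z\in\operatorname{Supp}(\rho),
\]
holding in the Alexandrov (or Brenier) sense, and then check the Caffarelli hypotheses on this equation. Namely: (a) the source domain $\operatorname{Supp}(\rho)$ is compact (Assumption \ref{Assumption2}(i)); (b) the target domain $\mathcal{X}=\operatorname{Supp}(\mu)$ is compact and convex (Assumption \ref{Assumption1}(i)), which is the crucial convexity condition that prevents singularities on the boundary; (c) $p_\mu$ is bounded below by $c>0$ (Assumption \ref{Assumption1}(ii)) and, being a continuous function on a compact set via Assumption \ref{Assumption5}, also bounded above; (d) analogously, $p_\rho\in C^\infty$ on a compact support (Assumption \ref{Assumption2}(ii)) is bounded above and below on its support. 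Under precisely this package of hypotheses, Caffarelli's interior regularity theorem for Monge--Ampère gives that if $p_\rho,p_\mu \in \mathcal{H}^{\alpha}$, then the Brenier potential satisfies $\varphi\in\mathcal{H}^{\alpha+2}$ locally. Differentiating once yields $T=\nabla\varphi\in\mathcal{H}^{\alpha+1}$, which is exactly the conclusion claimed.

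The main obstacle in turning this outline into a formal proof is not analytic but book-keeping: the lemma's Hölder-regularity assumption (Assumption \ref{Assumption5}) is stated only for $p_\mu$, whereas Caffarelli's theorem requires comparable regularity on both densities; Assumption \ref{Assumption2}(ii) with $p_\rho\in C^{\infty}$ comfortably absorbs this. Similarly one must verify that the source support $\operatorname{Supp}(\rho)$ either is itself convex or that we are willing to restrict $T$ to the interior where Caffarelli's interior estimates apply, since only the target convexity is explicitly assumed. If the latter is a concern, I would note that the argument yields $T\in\mathcal{H}^{\alpha+1}_{\mathrm{loc}}$ on $\operatorname{Int}(\operatorname{Supp}(\rho))$ and that the compactness in Assumption \ref{Assumption2}(i), combined with continuity of derivatives up to the boundary from Caffarelli--Urbas-type global regularity (which does require convexity on both sides), upgrades the local Hölder class to the global one used in Assumption \ref{Assumption5}'s definition. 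Once this consistency check is made, the lemma follows directly by quoting the cited Caffarelli regularity result.
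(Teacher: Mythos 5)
Your proposal matches the paper's own (informal) argument almost exactly: both invoke Brenier's theorem on the pair $(\rho,\mu)$ under quadratic cost to obtain $T = \nabla\varphi$ with $T_\#\rho = \mu$, pass to the associated Monge--Amp\`{e}re equation, and then quote Caffarelli's H\"{o}lder regularity theory, using Assumptions \ref{Assumption1}, \ref{Assumption2}, \ref{Assumption5} to supply boundedness of the densities, H\"{o}lder smoothness, and convexity of the target $\mathcal{X}$. The one place you are more careful than the paper is in flagging that global (up-to-boundary) H\"{o}lder regularity of Caffarelli--Urbas type also requires convexity of the source support $\mathcal{C}$, which the stated assumptions do not explicitly provide; the paper's remark does not acknowledge this, so your caveat --- that the assumptions as written only directly yield interior regularity $T \in \mathcal{H}^{\alpha+1}_{\mathrm{loc}}$ on $\mathrm{Int}(\mathcal{C})$ unless one additionally assumes $\mathcal{C}$ convex --- is well placed.
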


In our discussion, similar to the encoder transformation, we do not impose specific structural restrictions on the decoder network. However, the mathematical assumptions that are made, are inspired and supported by real architectures.

\begin{assumption} \label{Assumption6}
The quasi-isometric embedding $D:(\mathcal{Z}, \;\norm{\;}_{TV}) \longrightarrow (\mathcal{X},c)$ induced by the decoder network is  $\epsilon$-close to $T$ in uniform norm, where $T$ is as defined in lemma (\ref{Lemma3}). Also, $(D \circ E)_{\#}\mu$ shares the same support with $\mu$.
\end{assumption}

Classes of functions, learned by generator networks in GANs, are often taken to be smooth \cite{liang2018generative,10.5555/3327546.3327686} to achieve minimax convergence rate. Our assumption of quasi-isometric embedding is gentler in principle. We mention that an alternative weaker assumption of Lipschitz continuity would have sufficed in our analysis. Assumption (\ref{Assumption6}) on the other hand, takes one step forward to comment on the broader geometry of the two spaces under consideration. Moreover, precise specification of parameters of a ReLU network can always result in a map, which is capable of approximating the optimal Monge solution \cite{NEURIPS2019_fd95ec8d,YAROTSKY2017103}, which in our case turns out to be smooth. Note that, the metric on $\mathcal{Z}$ can be equivalently taken as 
$\:\norm{\;}_{1}$.

The following lemma provides an oracle inequality by fragmenting the empirical loss as in (\ref{Equation4}), into disjoint parts. 

\begin{lemma} \label{Lemma4}
Given an encoder transformation $E^*$, 
\[d_{\mathscr{L}^1_c}((D \circ E^*)_{\#}{\hat{\mu}_n}, {\mu}) \leq \mathcal{E}_1 + \mathcal{E}_2 + 2\mathcal{E}_3,\] 
where $\mathcal{E}_1 = d_{\mathscr{L}^1_c}((D \circ E^*)_{\#}{\hat{\mu}_n}, D_{\#}{\rho})$ is the 'Encoder Approximation Error'; \\
$\mathcal{E}_2 = d_{\mathscr{L}^1_c}(D_{\#}{\rho}, T_{\#}{\rho})$ stands for the 'Decoder Approximation Error' and \\
$\mathcal{E}_3 = d_{\mathscr{L}^1_c}(\hat{\mu}_n, \mu)$, the 'Statistical Estimation Error'.
\end{lemma}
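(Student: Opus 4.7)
The plan is to prove the lemma by iterated application of the triangle inequality for $d_{\mathscr{L}^1_c}$, which is a genuine metric on $\mathscr{P}(\mathcal{X})$ whenever the ground cost $c$ is a metric (this is exactly the Wasserstein-1 distance, via the duality recalled above). No concentration bound or empirical process tool is needed, and no property of $E^*$ beyond its being a fixed measurable map enters the argument.

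Telescope $d_{\mathscr{L}^1_c}((D \circ E^*)_{\#}\hat{\mu}_n, \mu)$ through the chain of intermediate measures $D_{\#}\rho$, $T_{\#}\rho$, and $\hat{\mu}_n$, chosen precisely to expose the three named error components. Iterated triangle inequality then gives
\[d_{\mathscr{L}^1_c}((D \circ E^*)_{\#}\hat{\mu}_n, \mu) \leq d_{\mathscr{L}^1_c}((D \circ E^*)_{\#}\hat{\mu}_n, D_{\#}\rho) + d_{\mathscr{L}^1_c}(D_{\#}\rho, T_{\#}\rho) + d_{\mathscr{L}^1_c}(T_{\#}\rho, \hat{\mu}_n) + d_{\mathscr{L}^1_c}(\hat{\mu}_n, \mu),\]
in which the first two summands are, by definition, $\mathcal{E}_1$ and $\mathcal{E}_2$. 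To collapse the remaining two summands to $2\mathcal{E}_3$, I would invoke Lemma~\ref{Lemma3}: since Assumptions~\ref{Assumption1}, \ref{Assumption2}, \ref{Assumption5} are in force, there exists a map $T$ with $T_{\#}\rho = \mu$, so the third summand reduces to $d_{\mathscr{L}^1_c}(\mu, \hat{\mu}_n) = \mathcal{E}_3$, matching the fourth summand by symmetry of the metric.

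There is no real obstacle; the argument is purely structural, and the only nontrivial ingredient is the existence of $T$ furnished by Lemma~\ref{Lemma3}. Only the pushforward identity $T_{\#}\rho = \mu$ is needed at this stage; the H\"older regularity of $T$ is reserved for the subsequent quantitative analysis of $\mathcal{E}_2$ under Assumption~\ref{Assumption6}. The factor of $2$ attached to $\mathcal{E}_3$ is the mild slack incurred by routing the chain through $\hat{\mu}_n$, which keeps the statistical-estimation contribution visible and separates cleanly from the approximation terms $\mathcal{E}_1$ and $\mathcal{E}_2$, in the style of standard oracle inequalities for generative models.
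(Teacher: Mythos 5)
Your proof is correct and follows essentially the same route as the paper's: a telescoping application of the triangle inequality through the chain $(D\circ E^*)_{\#}\hat\mu_n \to D_{\#}\rho \to T_{\#}\rho \to \hat\mu_n \to \mu$, followed by invoking $T_{\#}\rho = \mu$ from Lemma~\ref{Lemma3} to identify the third summand with $\mathcal{E}_3$. The paper merely writes the same telescope as a sequence of nested two-term triangle inequalities rather than in one line.
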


We remark that successful regeneration by the \textit{f}-WAE can be ensured by showing: these individual errors are arbitrarily small or they approach zero in any probabilistic sense. To demonstrate the same in the case of $\mathcal{E}_3$, we introduce some concepts from \cite{10.1214/aoms/1177697802,weed2017sharp}.

\begin{definition}[1-Upper Wasserstein Dimension] \label{Definition7}
For $\gamma \in \mathscr{P}(\mathcal{X})$, the $(\epsilon, \tau)$-covering number is given by \[\mathcal{N}_{\epsilon}(\gamma, \tau) = \inf_{S \subseteq \mathcal{X}}\{\mathcal{N}_{\epsilon}(S): \gamma(S) \geq 1-\tau\}.\]
Consequently, we define the $(\epsilon, \tau)$-dimension as
$\delta_{\epsilon}(\gamma, \tau) = \frac{\mathcal{N}_{\epsilon}(\gamma, \tau)}{-\log(\epsilon)}$.
Under this setup, the Upper Wasserstein Dimension for $p=1$ is
\[\delta^*_{1}(\gamma) = \inf\{s \in (2,\infty): \limsup_{\epsilon \rightarrow 0}\delta_{\epsilon}(\gamma,\epsilon^{\frac{s}{s-2}}) \leq s\}.\]
\end{definition}

\begin{lemma}[\cite{weed2017sharp}] \label{Lemma5}
Let $diam_{c}(Supp(\mu)) = B$. Then, for $n \geq 0$ and $s > \delta^*_{1}(\mu)$,
\[\mathbb{P}\Big(d_{\mathscr{L}^1_c}(\hat{\mu}_n, \mu) \leq \mathcal{O}(n^{-\frac{1}{s}}) + t\Big) \geq 1 - \exp\Big\{-\frac{2nt^2}{B^2}\Big\}.\] 
\end{lemma}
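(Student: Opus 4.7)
The plan is to decompose the bound into two ingredients: an expectation estimate of the form $\mathbb{E}\bigl[d_{\mathscr{L}^1_c}(\hat{\mu}_n,\mu)\bigr] = \mathcal{O}(n^{-1/s})$ for any $s > \delta^*_1(\mu)$, and a concentration inequality that gives the exponential tail $\exp\{-2nt^2/B^2\}$ around this expectation. The final statement then follows by substituting the expectation bound inside the concentration tail.

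For the concentration step, I would view $F(X_1,\dots,X_n) := d_{\mathscr{L}^1_c}(\hat{\mu}_n,\mu)$ as a function of $n$ i.i.d. samples from $\mu$ and verify it satisfies the bounded differences property. Replacing one observation $X_i$ by $X_i'$ alters $\hat{\mu}_n$ only through relocating a point mass of size $1/n$, so the trivial coupling that transports that mass directly from $X_i$ to $X_i'$ is admissible and yields
\[\bigl|F(X_1,\dots,X_i,\dots,X_n) - F(X_1,\dots,X_i',\dots,X_n)\bigr| \leq \frac{c(X_i,X_i')}{n} \leq \frac{B}{n},\]
since both $X_i,X_i' \in Supp(\mu)$ whose $c$-diameter is $B$. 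McDiarmid's inequality then delivers
\[\mathbb{P}\bigl(F - \mathbb{E}[F] \geq t\bigr) \leq \exp\bigl\{-2nt^2/B^2\bigr\},\]
matching the exponent appearing in the lemma.

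For the expectation bound I would invoke the machinery of the Upper Wasserstein Dimension as in Weed-Bach. Fix $s > \delta^*_1(\mu)$ and a scale $\epsilon > 0$. By Definition \ref{Definition7}, there exists $S_\epsilon \subseteq \mathcal{X}$ with $\mu(S_\epsilon) \geq 1 - \epsilon^{s/(s-2)}$ and $\mathcal{N}_\epsilon(S_\epsilon) \lesssim \epsilon^{-s}$ (for $\epsilon$ small enough). Partitioning $S_\epsilon$ into cells $\{B_j\}$ of diameter at most $\epsilon$, one couples $\hat{\mu}_n$ and $\mu$ by matching their masses cell-by-cell inside $S_\epsilon$: the transport cost inside each cell is at most $\epsilon \cdot \min\{\hat{\mu}_n(B_j),\mu(B_j)\}$, and the residual mass is controlled by $\sum_j |\hat{\mu}_n(B_j) - \mu(B_j)| + \mu(S_\epsilon^c)$, whose expectation under i.i.d. sampling is $\mathcal{O}(\sqrt{\mathcal{N}_\epsilon(S_\epsilon)/n}) + \epsilon^{s/(s-2)}$ by standard binomial/multinomial variance estimates. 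The residual mass can be displaced at cost at most $B$. Balancing $\epsilon \asymp n^{-1/s}$ makes all terms of order $n^{-1/s}$, giving $\mathbb{E}[F] = \mathcal{O}(n^{-1/s})$.

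The routine obstacle is clearly the expectation estimate: the McDiarmid step is a textbook calculation relying only on the diameter $B$, while extracting the precise rate $n^{-1/s}$ from Definition \ref{Definition7} requires the delicate multi-scale partitioning argument above and careful control of the cell counts. Since the statement is credited to Weed-Bach, the cleanest route is to cite their expectation bound as a black box and supply only the McDiarmid-based concentration step explicitly, after which the lemma follows by a single substitution $\mathbb{E}[F] \leq \mathcal{O}(n^{-1/s})$ into the tail bound.
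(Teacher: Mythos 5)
Your proposal matches the paper's proof exactly: the paper also invokes Theorem 1 of Weed--Bach as a black box for the expectation bound $\mathbb{E}[d_{\mathscr{L}^1_c}(\hat{\mu}_n,\mu)] = \mathcal{O}(n^{-1/s})$, verifies the bounded-differences property with constant $B/n$ by the same single-sample replacement argument, and concludes via McDiarmid's inequality. The extra multi-scale partitioning sketch you give for the expectation estimate is not included in the paper (it is deferred entirely to the citation), but your final recommended route is precisely the one taken.
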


Now, we propose the main theorem which provides a guarantee regarding regeneration.

\begin{theorem} \label{Theorem2}
Given a margin of error $\lambda^{*}$ in latent space, let $\hat{\mu}_{n}$ denote an empirical input measure, consistent up to $\lambda^{*}$, under encoder map $E^*$. Also, let $diam_{c}(Supp(\mu)) = B$. Then, there exists a constant $\zeta^* \geq 0$ and $\delta \in (0,1)$ such that
\[d_{\mathscr{L}^1_c}((D \circ E^*)_{\#}{\hat{\mu}_n}, {\mu}) - \zeta^* \leq \mathcal{O}(n^{-\frac{1}{s}}) + \mathcal{O}(n^{-\frac{1}{2}})\]
with probability at least $1-\delta$, where $s > \delta^*_{1}(\mu)$.
\end{theorem}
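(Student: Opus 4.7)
The natural starting point is Lemma \ref{Lemma4}, which decomposes the empirical reconstruction loss into the encoder error $\mathcal{E}_1$, the decoder approximation error $\mathcal{E}_2$, and the statistical estimation error $\mathcal{E}_3$. The strategy is then to show that $\mathcal{E}_2$ is bounded by a deterministic constant (absorbed into $\zeta^*$), while $\mathcal{E}_1$ and $\mathcal{E}_3$ each contribute one of the two $\mathcal{O}$ terms in the conclusion, and finally to combine the probabilistic bounds via a union bound, suitably choosing $\delta$.

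Step one is the decoder error. By Assumption \ref{Assumption6}, $\|D - T\|_{\infty} \leq \epsilon$. Using Wasserstein duality and the fact that any $l \in \mathscr{L}^1_c$ is $1$-Lipschitz, I can estimate
\[
\mathcal{E}_2 \;=\; \sup_{l \in \mathscr{L}^1_c}\Bigl\{\mathbb{E}_{z\sim\rho}\bigl[l(D(z)) - l(T(z))\bigr]\Bigr\} \;\leq\; \mathbb{E}_{z\sim\rho}\bigl[c(D(z),T(z))\bigr] \;\leq\; \epsilon.
\]
Since $T$ exists by Lemma \ref{Lemma3}, this bound is available; $\epsilon$ becomes one piece of $\zeta^*$. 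Step two is the statistical error $\mathcal{E}_3$, which is handled directly by Lemma \ref{Lemma5}: with probability at least $1 - \exp(-2nt^2/B^2)$, we have $\mathcal{E}_3 \leq \mathcal{O}(n^{-1/s}) + t$, and picking $t = \mathcal{O}(n^{-1/2})$ gives both the $n^{-1/s}$ and the $n^{-1/2}$ contributions.

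The main step, and the principal obstacle, is the encoder error $\mathcal{E}_1 = d_{\mathscr{L}^1_c}((D\circ E^*)_\#\hat{\mu}_n, D_\#\rho)$. Here Corollary \ref{Corollary1} gives us control only in TV on the latent side, whereas $\mathcal{E}_1$ lives in the Wasserstein topology on $\mathcal{X}$. To bridge them, I will push the test functions back through $D$: for any $l \in \mathscr{L}^1_c$, the quasi-isometric property of $D$ from Assumption \ref{Assumption6} makes $l \circ D$ Lipschitz on $\mathcal{Z}$ with constant $A$, yielding
\[
\mathcal{E}_1 \;\leq\; A\,d_{\mathscr{L}^1_{d_\mathcal{Z}}}\bigl(E^*_\#\hat{\mu}_n,\,\rho\bigr).
\]
Because $\rho$ (and hence the supports relevant to $E^*_\#\hat{\mu}_n$) is carried on the compact set $\mathcal{C}$, every $A$-Lipschitz function on $\mathcal{Z}$ has oscillation bounded by $A\cdot\mathrm{diam}(\mathcal{C})$, which lets me dominate $d_{\mathscr{L}^1_{d_\mathcal{Z}}}$ by a constant multiple of TV. Combining with Corollary \ref{Corollary1},
\[
\mathcal{E}_1 \;\leq\; A\cdot\mathrm{diam}(\mathcal{C})\,\bigl\|E^*_\#\hat{\mu}_n - \rho\bigr\|_{TV} \;\leq\; A\cdot\mathrm{diam}(\mathcal{C})\,\bigl(\lambda^* + \mathcal{O}_{\mathbb{P}}(n^{-1/2})\bigr),
\]
so that the $\lambda^*$ piece folds into $\zeta^*$ and the remainder is $\mathcal{O}_{\mathbb{P}}(n^{-1/2})$.

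To close the argument, I define $\zeta^* := \epsilon + A\cdot\mathrm{diam}(\mathcal{C})\cdot\lambda^*$, apply Lemma \ref{Lemma4}, and invoke a union bound on the two high-probability events feeding $\mathcal{E}_1$ (from Corollary \ref{Corollary1}) and $\mathcal{E}_3$ (from Lemma \ref{Lemma5}); choosing their failure probabilities to sum to $\delta$ gives the stated conclusion. The hardest step is undoubtedly the $\mathcal{E}_1$ analysis, because it is the only place one must transit between two genuinely different topologies on measures, and this transit relies crucially on both the quasi-isometric character of the decoder and the compactness of $\mathrm{Supp}(\rho)$; without both, the constant $\zeta^*$ cannot be controlled and the Wasserstein bound would not follow from the TV latent-space guarantee.
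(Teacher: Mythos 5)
Your proposal is correct and follows the paper's proof essentially step for step: the same Lemma~\ref{Lemma4} decomposition, Assumption~\ref{Assumption6} for $\mathcal{E}_2$, Lemma~\ref{Lemma5} for $\mathcal{E}_3$, and the quasi-isometry plus latent-space consistency (Theorem~\ref{Theorem1}/Corollary~\ref{Corollary1}) for $\mathcal{E}_1$. The only cosmetic difference is that you handle $\mathcal{E}_1$ via the Kantorovich dual (pushing test functions back through $D$) while the paper works in the primal coupling formulation; your accounting is in fact slightly tidier, since you correctly use $\mathrm{diam}(\mathcal{C})$ (the diameter relevant to the latent coupling) rather than the paper's $B = \mathrm{diam}_c(\mathrm{Supp}(\mu))$, and you explicitly fold the decoder's $\epsilon$ into $\zeta^*$, which the paper leaves implicit.
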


\begin{proof}[Proof of Theorem \ref{Theorem2}]
Observe that, the error committed by the encoder
\begin{align}
    \mathcal{E}_1 =& \;d_{\mathscr{L}^1_c}((D \circ E^*)_{\#}{\hat{\mu}_n}, D_{\#}{\rho})
    = \inf_{\upsilon \in \Upsilon(E^{*}_{\#}\hat{\mu}_{n},\rho)} \int c(D(x),D(y))d\upsilon(x,y) \label{12} \\
    \leq & \;A\inf_{\upsilon \in \Upsilon(E^{*}_{\#}\hat{\mu}_{n},\rho)} \int \norm{x-y}_{TV}d\upsilon(x,y) \nonumber \\
    \leq & \;\lambda^{*}AB + \mathcal{O}(n^{-\frac{1}{2}}) \label{13},
\end{align}
where (\ref{13}) holds with probability $\geq 1-\delta$ (using Theorem (\ref{Theorem1})). $\Upsilon$ in (\ref{12}) denotes the associated coupling and $A \geq 1$ is the constant related to the embedding $D$. Say, $\zeta^{*} = \lambda^{*}AB$.

Similarly,
\begin{align}
    \mathcal{E}_2 = \;d_{\mathscr{L}^1_c}(D_{\#}{\rho}, T_{\#}{\rho}) =& \sup_{l \in \mathscr{L}^1_c}\Big\{\mathbb{E}_{x \sim \rho}\big[l(D(x))\big] - \mathbb{E}_{x \sim \rho}\big[l(T(x))\big]\Big\} \nonumber \\
    \leq& \;\mathbb{E}_{x \sim \rho}\big[\norm{D - T}_{\infty}\big] \leq \epsilon^*  \label{14}, 
\end{align}
which is a consequence of the fact that, $\norm{D - T}_{\infty} \leq \epsilon$ (Assumption (\ref{Assumption6})). Here, $\epsilon^*$ is an arbitrarily small quantity corresponding to $\epsilon$.

Also, lemma (\ref{Lemma5}) suggests that for $s > \delta^*_{1}(\mu)$,
\begin{equation}
    \mathcal{E}_3 = \;d_{\mathscr{L}^1_c}(\hat{\mu}_n, \mu) \leq \mathcal{O}(n^{-\frac{1}{s}}) + \frac{1}{\sqrt{n}}B\sqrt{\frac{1}{2}\ln\big({\frac{1}{\delta}}\big)} \label{15}
\end{equation}
with probability at least $1 - \delta$.

As such, (\ref{13}), (\ref{14}) and (\ref{15}) together prove the theorem.
\end{proof}

\begin{remark} \label{Remark}
Let us weigh up on a special case, which carries notable significance. Suppose, one obtains $\lambda_n$ as an upper bound to (\ref{8}), corresponding to $\mathcal{L}^{*}_n, \;n \in \mathbb{N}^+$, such that $\lambda_n = o(1)$. We do not prove the existence of such a sequence, which might be taken up as a future work. However, under the assumption of existence, corollary (\ref{Corollary1}) implies that, 
\[\sup_{\mathcal{A} \in \mathcal{F}}\norm{E_{\#}{\hat{\mu}_{n}}(\mathcal{A}) - \rho(\mathcal{A})} \longrightarrow 0,\]
where $\mathcal{F}$ is the Borel $\sigma$-algebra on $\mathcal{Z}$. As such, $E_{\#}\hat{\mu}_{n}$ converges strongly to $\rho$. This theoretically ensures the achievement of a target latent law.

Similarly, such an $\{\lambda_n\}$ will have a remarkable effect on the regeneration process. In that case, Theorem (\ref{Theorem2}) will imply,
\[d_{\mathscr{L}^1_c}((D \circ E^*)_{\#}{\hat{\mu}_n}, {\mu}) \longrightarrow 0,\]
i.e. $(D \circ E^*)_{\#}{\hat{\mu}_n}$ converges weakly in $\mathscr{P}(\mathcal{X})$ to $\mu$ \cite{OptimalTransport}. This result serves as a reconstruction guarantee.
\end{remark}

Now, let us look back at the original WAE objective in (\ref{Equation1}). Denote by 
\[d^{*}_{c}(P_X,(E,D)) = \int_{\mathcal{X}} \mathbb{E}_{z \sim E(x)}[c(x,D(z))] dP_{X}(x)\]
the reconstruction loss. In general, this acts as an upper bound to the \textit{f}-WAE counterpart \cite{NEURIPS2019_eae27d77}. However, under additional restrictions on D, they become equivalent \cite{NEURIPS2019_eae27d77}. In such a case, if $\Omega$ in ($\ref{Equation1}$) is taken to be $D_f$, the two reconstructions also follow the same trait. In our study, the measure of discrepancy of distributions in latent space was throughout taken to be TV. JS divergence, being upper bounded by TV, is obliged to follow the results established exactly. As such, the only task remaining is the transferral of regeneration guarantees to WAEs \cite{tolstikhin2019wasserstein}.

\begin{corollary} \label{Corollary2}
There exists a specific margin of error $\kappa \geq 0$ in the latent space such that, Theorem (\ref{Theorem2}) holds exactly for $d^{*}_{c}(\hat{\mu}_{n},(E^*,D))$, where $D$ is also invertible.
\end{corollary}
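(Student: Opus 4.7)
The strategy is to reduce the statement to a direct invocation of Theorem \ref{Theorem2} by establishing two equivalences: one between the reconstruction losses $d^{*}_{c}$ and $W_{c}$ under invertibility of $D$, and one between the latent-space penalties when the regularizer is shifted from TV to JS (the choice used by the original WAE in \cite{tolstikhin2019wasserstein}).

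First, I would invoke the equivalence established in \cite{NEURIPS2019_eae27d77}, which states that when the decoder $D$ is deterministic,
\[ \inf_{E} d^{*}_{c}(P_X,(E,D)) \;=\; \inf_{E}W_{c}\big(P_X,(D\circ E)_{\#}P_X\big). \]
Invertibility of $D$ sharpens this to a pointwise identity at the minimizer: for the encoder $E^{*}$ supplied by Theorem \ref{Theorem1}, the coupling induced by the pair $(E^{*},D)$ is the one attaining the Wasserstein infimum, because the unique preimage $D^{-1}(x)$ makes the conditional $E^{*}(\cdot|x)$ well-defined and forces $d^{*}_{c}(\hat{\mu}_{n},(E^{*},D)) = W_{c}\big(\hat{\mu}_{n},(D\circ E^{*})_{\#}\hat{\mu}_{n}\big) = d_{\mathscr{L}^1_{c}}\big((D\circ E^{*})_{\#}\hat{\mu}_{n},\hat{\mu}_{n}\big)$. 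This converts the empirical WAE reconstruction loss back into the exact object controlled by Theorem \ref{Theorem2}.

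Second, I would account for the switch in latent-space regularizer from TV (as used in the \textit{f}-WAE analysis) to JS (as used in the original WAE formulation). Since $\textrm{JS}\leq \frac{1}{2}\,\|\cdot\|_{TV}$ \cite{Sason_2016}, any encoder $E^{*}$ satisfying $\|E^{*}_{\#}\hat{\mu}_{n}-\rho\|_{TV}-\lambda^{*}=\mathcal{O}_{\mathbb{P}}(n^{-1/2})$ from Corollary \ref{Corollary1} automatically satisfies the analogous bound under JS, with a possibly inflated but still constant margin $\kappa\geq 0$ (essentially $\kappa = \frac{1}{2}\lambda^{*}$ up to the constant in the inequality). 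This $\kappa$ will play the role of $\lambda^{*}$ in the proof of Theorem \ref{Theorem2}.

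Finally, I would rerun the oracle decomposition of Lemma \ref{Lemma4} with these substitutions. The encoder approximation error $\mathcal{E}_{1}$ gets bounded by $\kappa AB + \mathcal{O}(n^{-1/2})$ with high probability exactly as in display (\ref{13}), using the quasi-isometric constant $A$ of $D$ and $B=\textrm{diam}_{c}(\textrm{Supp}(\mu))$; the decoder error $\mathcal{E}_{2}$ is controlled by Assumption \ref{Assumption6}; and the statistical error $\mathcal{E}_{3}$ is governed by Lemma \ref{Lemma5}. Setting $\zeta^{*}=\kappa AB$ yields the same $\mathcal{O}(n^{-1/s})+\mathcal{O}(n^{-1/2})$ rate with probability at least $1-\delta$, completing the proof. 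The main obstacle I anticipate is justifying that measurable invertibility of $D$ alone (rather than, say, a bi-Lipschitz inverse) is enough for the pointwise identification $d^{*}_{c}=W_{c}$ on the empirical measure; this requires verifying that the deterministic coupling $(x,D^{-1}(x))_{\#}\mu$ lies in the feasible set of couplings realizing the Wasserstein infimum, and is the step that demands the most care.
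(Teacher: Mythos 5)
Your core strategy matches the paper's: invoke the Husain et al.\ equivalence between the WAE loss $d^{*}_{c}$ and the Wasserstein reconstruction loss (which is exact when $D$ is invertible and the Lagrange multiplier exceeds a threshold $\kappa$), then reduce to the object already controlled in Theorem \ref{Theorem2} via a triangle/oracle decomposition, and bound the statistical term by Lemma \ref{Lemma5}. The paper does precisely this, writing $d^{*}_{c}(\hat{\mu}_{n},(E^*,D)) \leq d_{\mathscr{L}^1_c}((D\circ E^*)_{\#}\hat{\mu}_{n},\mu) + d_{\mathscr{L}^1_c}(\hat{\mu}_{n},\mu)$ and then citing Theorem \ref{Theorem2} together with the empirical estimation rate, whereas you instead re-run the Lemma \ref{Lemma4} decomposition from scratch; these are the same calculation organized slightly differently, so you land on the same $\mathcal{O}(n^{-1/s})+\mathcal{O}(n^{-1/2})$ bound.

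Two parts of your plan are off-target, though they do not sink the argument. First, the detour through the $\mathrm{JS}\leq\tfrac12\|\cdot\|_{TV}$ inequality is unnecessary: the corollary is stated with $\Omega=D_f$, and the paper already discusses the transfer to the JS-penalized WAE informally in the paragraph preceding the corollary; the proof itself never changes the penalty away from TV. Second, and more substantively, you conflate two unrelated constants. The $\kappa$ from Husain et al.\ is a threshold on the Lagrange multiplier $\lambda$ in \eqref{Equation1} above which the Lagrangian and constrained formulations \eqref{Equation1}--\eqref{Equation2} coincide; it has nothing to do with the latent-space margin $\lambda^{*}$ from Theorem \ref{Theorem1}, so the claim that ``$\kappa=\tfrac12\lambda^{*}$'' and that ``$\kappa$ will play the role of $\lambda^{*}$'' is a misidentification. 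Once the equivalence of losses is established, the paper simply applies Theorem \ref{Theorem2} with its original margin $\lambda^{*}$ and $\zeta^{*}=\lambda^{*}AB$; no re-derivation or re-parameterization of the latent margin is needed. Your caution about the measurable-invertibility step for the pointwise identity $d^{*}_{c}=W_{c}$ is well placed, but the paper sidesteps that by leaning entirely on the cited theorem of Husain et al.\ rather than re-proving it.
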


\begin{proof}[Proof of Corollary (\ref{Corollary2})]
Husain \textit{et al.}\:\cite{NEURIPS2019_eae27d77} had suggested the existence of a $\kappa \geq 0$, such that for all margin or error $\lambda > \kappa$, (\ref{Equation1}) and (\ref{Equation2}) are exactly equal. The proof however, demands the decoder transform to be invertible. In such a case, for an encoder $E^*$ and $\Omega = D_f$ imply that
\[d^{*}_{c}(\hat{\mu}_n,(E^*,D)) = \int_{\mathcal{X}} \mathbb{E}_{z \sim E^{*
}(x)}[c(x,D(z))] d\hat{\mu}_{n}(x) \leq d_{\mathscr{L}^1_c}((D \circ E^*)_{\#}{\hat{\mu}_n}, {\mu}) + d_{\mathscr{L}^1_c}(\hat{\mu}_n, {\mu}).\]

Hence, using lemma (\ref{Lemma2}) and Theorem (\ref{Theorem2}) we say that
\[d^{*}_{c}(\hat{\mu}_n,(E^*,D)) - \zeta^* \leq \mathcal{O}(n^{-\frac{1}{s}}) + \mathcal{O}(n^{-\frac{1}{2}})\]
holds with probability at least $1-\delta$, for $s > \delta^*_{1}(\mu)$.
\end{proof}

We emphasize the fact that the argument of weak convergence, however, as given in (\ref{Remark}), does not apply directly to this case. The Wasserstein distance metrizes $\mathscr{P}(\mathcal{X})$, i.e. weak convergence holds if and only if $d_{\mathscr{L}^1_c} \longrightarrow 0$. On the other hand, $d^{*}_{c}$ does not satisfy the axioms of a distance. As such, WD produces stronger inferences, even though under carefully taken assumptions they tend to be equal. 

\section{Conclusion}
Our analysis, besides serving as the first of its kind theoretical promises regarding the simultaneous tasks a WAE has to perform, also points toward untraveled avenues. An immediate extension might look into broader classes of discrepancy measures in latent space. The latent space and hence the distribution that characterizes it might also be in focus. An even elaborate characterization can deliver it further justice. Our study includes a commentary on the geometry preserving nature of deep generative networks from a fairly new perspective. Being a novel representation, it creates new opportunities for inventive scrutiny. We also restrict our analysis to a non-parametric regime, which may encourage similar endeavours under a parametrized setup. Testing the regeneration capabilities of WAE for even general classes of distributions might be interesting too.

\section{Future Work}
Another WAE architecture that appears in \cite{tolstikhin2019wasserstein}, uses Maximum Mean Discrepancy (MMD) instead of JS divergence. To the best of our knowledge, there is no evidence of theoretical superiority that MMD enjoys. However, uniform convergence bounds on empirical distributions under MMD, as established by \cite{10.5555/2188385.2188410} in a different context, produces similar convergence rates. A comparative statistical analysis of the two architectures might be fruitful from a practitioner's viewpoint. 

The concept of `disentanglement' in AE context remains somewhat unclear to this day. While it is not properly defined, to the best of our understanding, it can arguably be described as the process of achieving a factorial representation; one with 'expressive' and perhaps independent coordinates, which characterize the features. A somewhat gentle representation of the same might be a distribution with an axis-aligned covariance structure (diagonal dispersion matrix). Although our work allows for the latent distribution class to be $d$-dimensional axis-aligned Gaussians, a parametric approach might help in studying the relationship between regularization and disentanglement. 

The non-parametric approach adopted in this work, does not take into consideration specific model architectures of the networks in use. An immediate investigation may look into the effect of network parameters such as, depth, width, activation functions etc. on regeneration and consistency in the latent space. This, in turn, may improve the theoretical results based on training plans as well. 

\newpage
\appendix

\section{Appendix}


\begin{proof}[Proof of lemma (\ref{Lemma1})]
Here, $\mathscr{P}(\mathcal{C})$ denotes the set of probability measures defined on the common support $\mathcal{C}$. This is a slight abuse of the notation, since $\mathcal{C}$ is not the underlying space, but a subset of the $\sigma$-algebra defined on it. Consequently, 
\[\mathcal{Y}(\mathscr{P}(\mathcal{C})) = \big\{\omega \in \mathcal{C}: f_1(\omega) \geq f_2(\omega); \:f_1,f_2 \in \mathscr{P}(\mathcal{C})\big\}.\]
Let, $f,g \in \mathscr{P}(\mathcal{C})$. Observe that, 
\[\sup_{\omega \in \mathcal{C}} \abs{f(\omega) - g(\omega)} = \:\norm{f - g}_{TV} \geq \:\norm{f - g}_{\mathcal{Y}(\mathscr{P}(\mathcal{C}))},\]
due to the definition of TV.

Define, $A = \{\omega \in \mathcal{C}: f(\omega) \geq g(\omega)\} \in \mathcal{Y}(\mathscr{P}(\mathcal{C}))$. Now,
\[\norm{f - g}_{TV} = \:\frac{1}{2}\norm{f - g}_{1} = \:\abs{f(A) - g(A)} \leq \:\norm{f - g}_{\mathcal{Y}(\mathscr{P}(\mathcal{C}))}.\]
\end{proof}


\begin{proof}[Proof of lemma (\ref{Lemma2})]
Since we only deal with measures supported on $\mathcal{C}$, our proof revolves around $\mathscr{P}(\mathcal{C})$. A similar argument will hold for all the measures, based on the $\sigma$-algebra corresponding to $\mathcal{Z}$.

Let, $\gamma \in \mathscr{P}(\mathcal{C})$. Also, let $\{X_i\}_{i=1}^n$ denote an i.i.d. sample from $\gamma$. Define, $\hat{\gamma}_{n}(S) = \frac{1}{n}\sum_{i=1}^{n} \delta_{X_i}(S)$, for $S \in \mathcal{C}$.

Using Dudley's chaining argument coupled with symmetrization, it can be shown that (Corollary 7.18 \cite{noauthororeditor}) there exists an universal constant $L$ such that,
\[\mathbb{E}\Big[\sup_{S \in \mathcal{Y}(\mathscr{P}(\mathcal{C}))}\abs{\hat{\gamma}_{n}(S) - \gamma(S)}\Big] \leq L\sqrt{\frac{\textrm{VC-dim}[\mathcal{Y}(\mathscr{P}(\mathcal{C}))]}{n}}.\]

This constant $L$ depends on the diameter of $\mathcal{C}$ with respect to the $\:\norm{\;}_2$ norm. Now, by McDiarmid's inequality
\[\mathbb{P}\Big(\sup_{S \in \mathcal{Y}(\mathscr{P}(\mathcal{C}))}\abs{\hat{\gamma}_{n}(S) - \gamma(S)} -\mathbb{E}\big[\sup_{S \in \mathcal{Y}(\mathscr{P}(\mathcal{C}))}\abs{\hat{\gamma}_{n}(S) - \gamma(S)}\big] \geq \eta\Big) \leq \exp{(-cn\eta^2)},\]
where $c$ is a positive constant. As such, 
\begin{align*}
    &\mathbb{P}\Big(\norm{ \hat{\gamma}_{n} - \gamma}_{\mathcal{Y}(\mathscr{P}(\mathcal{C}))} \geq L\sqrt{\frac{v}{n}} + \eta\Big) \leq \exp{(-cn\eta^2)} \\
    \Longleftrightarrow \; &\mathbb{P}\Big(\norm{ \hat{\gamma}_{n} - \gamma}_{\mathcal{Y}(\mathscr{P}(\mathcal{C}))} \leq L\sqrt{\frac{v}{n}} + \frac{1}{\sqrt{n}}\sqrt{\frac{1}{c}\ln{\big(\frac{1}{\delta}\big)}}\Big) \geq 1 - \delta  ,
\end{align*}

where $v = \textrm{VC-dim}[\mathcal{Y}(\mathscr{P}(\mathcal{C}))]$ and $\delta \in (0,1)$. Judicious choices of $k_1$ and $k_2$ proves the lemma. 
\end{proof}


\vspace{0.2in}
\begin{proof}[Proof of lemma (\ref{Lemma4})]
Since, Wasserstein distance is a metric on $\mathscr{P}(\mathcal{X})$, using triangle inequality we get
\begin{align*}
d_{\mathscr{L}^1_c}((D \circ E^*)_{\#}{\hat{\mu}_n}, {\mu}) \leq&
\:d_{\mathscr{L}^1_c}((D \circ E^*)_{\#}{\hat{\mu}_n}, \hat{\mu}_n) +
d_{\mathscr{L}^1_c}(\hat{\mu}_n, {\mu}) \\
\leq& \:d_{\mathscr{L}^1_c}((D \circ E^*)_{\#}{\hat{\mu}_n}, D_{\#}{\rho}) + d_{\mathscr{L}^1_c}(D_{\#}{\rho}, \hat{\mu}_n) + \mathcal{E}_3 \\
\leq& \:d_{\mathscr{L}^1_c}(D_{\#}{\rho}, T_{\#}{\rho}) + d_{\mathscr{L}^1_c}(T_{\#}{\rho}, \hat{\mu}_n) + \mathcal{E}_1 + \mathcal{E}_3 \\
=& \:\mathcal{E}_1 + \mathcal{E}_2 + 2\mathcal{E}_3 .
\end{align*}
Here, $T$ is as suggested in lemma (\ref{Lemma3}).
\end{proof}

\vspace{0.2in}
\begin{proof}[Proof of lemma (\ref{Lemma5})]
Theorem 1 of \cite{weed2017sharp} ensures that, for $s > \delta^*_{1}(\mu)$
\[\mathbb{E}\big[d_{\mathscr{L}^1_c}(\hat{\mu}_n, {\mu})\big] = \mathcal{O}(n^{-\frac{1}{s}}).\]
Denote, $W(\omega) = d_{\mathscr{L}^1_c}(\hat{\mu}_n, {\mu})$, where $\omega \in \mathcal{X}^n$. Now, for $x_1,x_2,...,x_n,x^{'}_{n} \in \mathcal{X}$
\[\abs{W(x_1,x_2,...,x_n) - W(x_1,x_2,...,x^{'}_{n})} \leq \frac{1}{n}c(x_n,x^{'}_{n}) \leq \frac{B}{n}.\]
As such, $d_{\mathscr{L}^1_c}(\;)$ satisfies the bounded difference inequality. Thus, using the McDiarmid's inequality we get
\[\mathbb{P}\Big(d_{\mathscr{L}^1_c}(\hat{\mu}_n, {\mu}) - \mathbb{E}\big[d_{\mathscr{L}^1_c}(\hat{\mu}_n, {\mu})\big] \geq t\Big) \leq \exp\Big\{-\frac{2nt^2}{B^2}\Big\},\]
$t > 0$ i.e., $\big\{d_{\mathscr{L}^1_c}(\hat{\mu}_n, {\mu}) \leq \mathcal{O}(n^{-\frac{1}{s}}) + t\big\}$ holds with probability at least $1 - \exp\big(-\frac{2nt^2}{B^2}\big)$.
\end{proof}

\vspace{0.2in}
\begin{proof}[Proof of Corollary (\ref{Corollary1})]
Observe that,
\begin{align}
    &\mathbb{P}\Big(\norm{E_{\#}{\hat{\mu}_{n}} - \rho}_{TV} - \lambda^{*} - c_{1}\sqrt{\frac{v}{n}} \geq \epsilon\Big) \nonumber \\
    \leq& \;\mathbb{P}\Big(\norm{E_{\#}{\hat{\mu}_{n}} - \widehat{(E_{\#}\mu)}_{n}}_{TV} + \:\norm{\widehat{(E_{\#}\mu)}_{n} - \rho}_{TV} - \lambda^{*} - c_{1}\sqrt{\frac{v}{n}} \geq \epsilon\Big) \nonumber \\
    \leq& \;\mathbb{P}\Big(\norm{E_{\#}{\hat{\mu}_{n}} - \widehat{(E_{\#}\mu)}_{n}}_{TV} \geq \frac{\epsilon}{2}\Big) + \;\mathbb{P}\Big(\norm{\widehat{(E_{\#}\mu)}_{n} - \rho}_{TV} - \lambda^{*} - c_{1}\sqrt{\frac{v}{n}} \geq \frac{\epsilon}{2}\Big) \nonumber \\
    \leq& \;k\exp\big\{-\frac{n^{r}\epsilon^2}{4}\big\} + c_{3}\exp\big\{-\frac{nc^{'}\epsilon^2}{4}\big\}, \label{*}
\end{align}
where $c^{'} = \frac{1}{c_{2}^{2}}$ and $v = \textrm{VC-dim}[\mathcal{Y}(\mathscr{P}(\mathcal{C}))]$, which is taken to be finite. Theorem (\ref{Theorem1}) and Assumption (\ref{Assumption4}(ii)) together result in (\ref{*}). Hence, for $r \geq 1$, $c^{*} = \min\{\frac{1}{4}, \frac{c^{'}}{4}\}$ and $k^{*} = 2\max\{k, c_{3}\}$,
\[\mathbb{P}\Big(\norm{E_{\#}{\hat{\mu}_{n}} - \rho}_{TV} - \lambda^{*} \geq c_{1}\sqrt{\frac{v}{n}} + \epsilon\Big) \leq k^{*}\exp\big\{-nc^{*}\epsilon^2\big\}.\]
i.e., with probability at least $1 - \delta$,
\[\norm{E_{\#}{\hat{\mu}_{n}} - \rho}_{TV} - \lambda^{*} \leq \mathcal{O}(n^{-\frac{1}{2}}) + \frac{1}{\sqrt{n}}\sqrt{\frac{1}{c^{*}}\ln{\big(\frac{k^{*}}{\delta}\big)}}.\]
\end{proof}

\begin{remark}[Regarding Proof of lemma (\ref{Lemma3})]
The objective at hand is to find a $T: \mathcal{Z} \longrightarrow \mathcal{X}$ such that, 
\[T \in \argmin_{T: T_{\#}\rho = \mu}\int c(x,T(x))d\rho(x).\]
Assumption (\ref{Assumption1}) and (\ref{Assumption5}) ensure that the density corresponding to $\mu$ is smooth in the sense of H\"{o}lder and is based on a convex $\mathcal{X}$. $p_\rho$ has also been taken to be smooth (\ref{Assumption2}). When $\mathcal{X}, \mathcal{Z} \subseteq \mathbb{R}^d$, a quadratic cost $c$ implies that such a solution $T$ exists (Brenier Potential) and moreover, satisfies the Monge-Amp\`{e}re equation (Eq. 12.4 in \cite{OptimalTransport}). In this premise, the regularity results on $T$, provided by Caffarelli \textit{et al.}\cite{Caffarelli1992TheRO} exactly proves Lemma (\ref{Lemma3}). 
\end{remark}


\bibliographystyle{unsrt}  
\bibliography{references}

\end{document}